\documentclass[11pt]{article}

\usepackage[margin=1in]{geometry}

\usepackage{amsmath}
\usepackage{amssymb}
\usepackage{amsthm}
\usepackage[round]{natbib}

\usepackage{url}

\usepackage{color}
\usepackage{graphicx}
\usepackage{float}
\usepackage{multirow}

\usepackage{datetime}
\usepackage{mathtools}
\mathtoolsset{showonlyrefs}
\usepackage{enumerate}

\usepackage{hyperref}
\def\name{\normalsize\bfseries}
\def\email{\quad\normalfont\small}
\def\addr{\normalfont\itshape\small}
\newenvironment{keywords}%
  {\par\vskip 0.5em\noindent\textbf{Keywords:\ }}%
  {\par}
\long\def\acks#1{\vskip 0.3in\noindent\textbf{Acknowledgments and Disclosure of
  Funding}\vskip 0.15in\noindent #1}

\newtheorem{theorem}{Theorem}
\newtheorem{lemma}[theorem]{Lemma}
\newtheorem{proposition}[theorem]{Proposition}

\newtheorem{definition}[theorem]{Definition}
\newtheorem{remark}[theorem]{Remark}

\newcommand{\fbi}{f^{\backslash i}}
\newtheorem{assumption}[theorem]{Assumption}

\newcommand{\repofootnote}{Code to reproduce all experiments is available at \url{https://github.com/adam-oberman/fast-rates-ssl}.}

\begin{document}

\title{Fast Rates for Semi-Supervised Learning via\\ Data-Augmentation Graph Regularization}

\author{\name Adam M. Oberman \\ \email adam.oberman@mcgill.ca \\
       \addr Department of Mathematics and Statistics, McGill University\\
       \addr  Mila, Quebec AI Institute\\
       \addr LawZero
       }

\date{July 2026}

\maketitle

\begin{abstract}%
Self-supervised learning matches supervised accuracy from a fraction of the labels, but the
labeled-sample efficiency behind this has lacked a theoretical explanation. We provide one. Data
augmentation induces a similarity graph on the unlabeled data, so downstream learning on that
graph is graph-Laplacian-regularized learning. We prove a fast \emph{transductive} rate,
$O(1/n_L)$ in the number of labels, in place of the supervised $O(1/\sqrt{n_L})$, by carrying the
leave-one-out stability apparatus of Johnson and Zhang (JMLR 2007) over to the augmentation graph,
and without the unrealistic assumptions of limit-based analyses (exact kernel, generalizing
features). The bound makes augmentation quality explicit: the expected error is at most
$C/n_L + R_{\mathrm{DA}}(y)$, where the data-augmentation alignment error $R_{\mathrm{DA}}(y)$ is
proportional to the graph-cut mass of augmentations that cross a label boundary, so good augmentations let few
labels suffice. The analysis uses a streamlined loss that drops the projector, negative-sample,
and orthogonality overhead of standard objectives yet still recovers the top-$K$ ideal features in
the infinite-data limit, the augmentation-kernel eigenspace studied by Zhai et al. The bound gives a mechanistic account of the accuracy-versus-label-count curve through
augmentation quality, verified in a controlled model where the constants are known.
\end{abstract}

\begin{keywords}
  semi-supervised learning, data augmentation, graph Laplacian regularization,
  algorithmic stability, fast rates
\end{keywords}

\section{Introduction}\label{sec:intro}

Self-supervised learning matches supervised accuracy from a small fraction of the labels.
SimCLR~\citep{chen2020simclr} matches a supervised ResNet-50 from a linear probe and is strong
at $1\%$ and $10\%$ of ImageNet labels; SimCLRv2~\citep{chen2020simclrv2} reaches $80.9\%$ top-1
with only $10\%$ of the labels, an order-of-magnitude label saving; and CPC
v2~\citep{henaff2020cpcv2} named the ``data-efficient'' framing with two to five times fewer
labels. This accuracy-versus-label-count curve is one of the most reported empirical facts about
representation learning, yet it remains theoretically under-explained.

The feature-learning side of the story is by now well understood. Different self-supervised
losses (spectral contrastive, contrastive and non-contrastive, kernel-PCA) all recover the
leading eigenfunctions of the augmentation similarity kernel $k^{\mathrm{DAF}}$. But that theory
lives in the infinite-data limit: it characterizes the features a method would learn from
unlimited unlabeled data, and says nothing about the finite \emph{labeled}-sample behavior
downstream, which is what the curve above measures. The assumptions used there, that the
expectation kernel is learned exactly and that the factoring features generalize to unseen
points, are also unrealistic in the few-augmentation regime.

This paper supplies the missing labeled-sample guarantee. The starting point is that data
augmentation equips the unlabeled sample with a similarity graph, on which downstream learning
is Laplacian-regularized graph learning. For that setting a transductive fast rate is already
available: \citet{ZhangLaplacian} bound the transductive error of
Laplacian-regularized multi-class graph learning by leave-one-out algorithmic stability and obtain
an $O(1/n)$ rate in the labeled count under balanced components, governed by a learning-theoretic
graph cut. Our rate results (Theorems~\ref{thm:rate} and~\ref{thm:daerror}) are that apparatus,
specialized and reinterpreted for the augmentation graph; the contribution is not a new rate
mechanism but its placement. Read on the augmentation graph, the cut becomes an
\emph{augmentation-quality} term $R_{\mathrm{DA}}(y)$, the number of graph components becomes a
property of the augmentation distribution, and the bound turns into a statement about how few labels
a good augmentation buys, in place of the supervised $O(1/\sqrt{n_L})$. 
A related applied result, the NeurIPS paper of \citet{ghosh2024harnessing} (showing $2\times$ less pretraining data), motivates the pretraining axis.  Here we give a rate-based, mechanistic
account of the downstream labeled-sample efficiency, its mechanism checked in a controlled model
with known constants (Section~\ref{sec:expA}) and the phenomenon itself shown descriptively
on CIFAR-10 (Section~\ref{sec:cifar}).

\paragraph{Contributions.}
\begin{itemize}
  \item \textbf{A fast transductive rate on the augmentation graph (Theorem~\ref{thm:rate}).}
        An $O(1/n_L)$ oracle inequality for labels propagating from the labeled set to the
        unlabeled nodes, obtained by leave-one-out stability on the augmentation graph. Relative to the feature-learning limit theory, the argument also avoids the
        unrealistic assumptions noted above: it never requires the kernel to be learned exactly, and it bounds
        the error on the actual unlabeled sample rather than assuming features generalize.
  \item \textbf{Augmentation quality enters the bound (Theorem~\ref{thm:daerror}).} The expected
        error is bounded by $C/n_L + R_{\mathrm{DA}}(y)$, where the data-augmentation alignment
        error $R_{\mathrm{DA}}(y)$ is proportional to the graph-cut mass of augmentations that cross a label
        boundary. This term \emph{explains} accuracy: better augmentations shrink it, reducing the
        labels required.
  \item \textbf{A streamlined loss (Theorem~\ref{thm:features}).} Dropping the
        projector-dimension, negative-sample, and orthogonality / non-collapse overhead (one
        label per class already prevents collapse) costs nothing in the limit: the streamlined
        loss is algorithmically stable and still recovers the top-$K$ ideal features.
\end{itemize}

\paragraph{Consequences.}
Each ingredient of the bound is actionable.
(i) \emph{Augmentation selection.} $R_{\mathrm{DA}}(y)$ is an explicit graph cut, estimable
from a small labeled sample and the augmentation graph before the label budget is spent
(Section~\ref{sec:daerror}); candidate augmentation pipelines can be ranked by estimated cut
mass, making \eqref{eq:daerror} a selection procedure.
(ii) \emph{A simpler training objective.} By Theorem~\ref{thm:features}, the projector-dimension,
negative-sample, and orthogonality overhead of standard self-supervised losses can be dropped:
the streamlined loss is stable and recovers the same top-$K$ features, so the overhead is not
needed for the downstream guarantee.
(iii) \emph{Label-budget guidance.} The additive form of \eqref{eq:daerror} separates the
label budget from the augmentation quality: once $C/n_L$ falls below the estimated $R_{\mathrm{DA}}(y)$,
further labels buy little in the surrogate and the augmentation is the binding constraint,
while the multiplicative form predicts the $0/1$ error keeps descending
(Section~\ref{sec:expA}).
(iv) \emph{When a graph probe pays.} Theorem~\ref{thm:features} predicts that a backbone
pretrained on the matching augmentations has already performed the graph smoothing, so a
linear probe suffices there and graph regularization \eqref{eq:alg} should pay when
features were not pretrained on the augmentation defining the graph, the pattern seen in
Section~\ref{sec:cifar}. Section~\ref{sec:discussion} develops these points.

\section{Setup and notation}\label{sec:setup}

We study vector classification on a data domain $\mathcal{X}\subset\mathbb{R}^{d_0}$ with
label set $\mathcal{Y}=\{1,\dots,K\}$. A score function $g:\mathcal{X}\to\mathbb{R}^K$ is
followed by the argmax one-hot classifier; we write $e_y$ for the one-hot vector of label
$y$. The training loss is a standard classification loss $\ell(g,z)$, and for $z=(x,y)$ we
abbreviate $\ell(g,z)=\ell(g(x),y)$.

\begin{assumption}[$\sigma$-admissible loss]\label{ass:loss}
The loss $\ell(\cdot,y)$ is convex in its first argument and $\sigma_\ell$-Lipschitz on a
bounded domain $D_\ell$:
\[
|\ell(f,y)-\ell(g,y)|\le \sigma_\ell\,\|f-g\|,
\qquad \forall f,g\in D_\ell,\ \forall y\in\mathcal{Y}.
\]
\end{assumption}

\noindent This holds for the standard score-based losses (cosine-similarity, cross-entropy
on a bounded domain, regularized least squares); see Appendix~\ref{app:proofs}.

\paragraph{Transductive data.}
Let $S^U=\{x_1,\dots,x_m\}$ be the unlabeled sample of $m$ points. A subset of
$n_L\le m$ of them is labeled, giving $S^L=\{z_1,\dots,z_{n_L}\}$ with
$z_j=(x_j,y_j)$; throughout, $S^L\subset S^U$ (revealing a label does not add a point).
The labeled training loss is
\[
L(g,S^L)=\frac{1}{n_L}\sum_{z_j\in S^L}\ell(g,z_j).
\]
The setting is \emph{transductive}: error is measured on the unlabeled nodes of the fixed
sample $S^U$, not on a fresh draw.

\subsection{The data-augmentation graph}
Data augmentations are random maps $M:\mathcal{X}\to\mathcal{X}$. Writing
$p(z\mid x_0)=\mathbb{P}\!\left(z=M(x_0)\right)$ for the probability of reaching $z$ from a
seed $x_0$, the (forward) data-augmentation covariance kernel is
\begin{equation}\label{eq:kdaf}
k^{\mathrm{DAF}}(x,z)=\mathbb{E}_{x_0\sim\rho_X}\!\left[\,p(x\mid x_0)\,p(z\mid x_0)\,\right],
\end{equation}
together with its symmetric degree-normalized version. Writing
$d(x)=\mathbb{E}_{z\sim\rho_X}\!\left[k^{\mathrm{DAF}}(x,z)\right]$ for the augmentation mass
(degree) at $x$, the normalized kernel is
\begin{equation}\label{eq:knorm}
k(x,z)=\frac{k^{\mathrm{DAF}}(x,z)}{\sqrt{d(x)}\,\sqrt{d(z)}} .
\end{equation}
On the sample $S^U$ this gives the weighted graph $G=(S^U,W)$ with raw weights
$w_{ij}=k^{\mathrm{DAF}}(x_i,x_j)\ge 0$, degrees $\mathbf{S}_j=\deg_j(G)=\sum_{j'}w_{j,j'}$, and
normalized adjacency $W=\mathbf{S}^{-1/2}\,[w_{ij}]\,\mathbf{S}^{-1/2}$, so that
$W_{ij}=k(x_i,x_j)$. Convolution against $W$ realizes the integral operator $T_K$ on $\rho_X$:
applied to features on $S^U$, $W f(S^U)=T_K f$, recovering the ideal RKHS picture in the
large-sample limit. The degrees $\mathbf{S}_j$ are the scaling factors of the
$\mathbf{S}$-normalized Laplacian used below. The symmetric normalization
$W=\mathbf{S}^{-1/2}[w_{ij}]\mathbf{S}^{-1/2}$ is the geometric mean of the two random-walk
normalizations $\mathbf{S}^{-1}[w_{ij}]$ and $[w_{ij}]\mathbf{S}^{-1}$, which encode the
\emph{forward} chain (augment a seed and ask where the view lands) and the \emph{backward} chain
(given a view, ask which seed produced it); the two are adjoint, share the spectrum of $W$, and the
data-augmentation error $R_{\mathrm{DA}}(y)$ of Section~\ref{sec:daerror} is measured in this
symmetric normalization.

\subsection{The regularizer and the combined objective}
Let $\mathbf{K}\in\mathbb{R}^{m\times m}$ be a symmetric positive-definite kernel matrix
\emph{derived from the augmentation graph} $G$ (and from a fixed backbone $f_0$), so that
$\mathbf{K}$ depends on neither the labels nor the trained score $g$. The quadratic
regularizer penalizes each class score separately,
\begin{equation}\label{eq:reg}
Q(g,S^U)=\sum_{k=1}^K g_{\cdot,k}^\top\,\mathbf{K}^{-1}\,g_{\cdot,k},
\qquad g_{\cdot,k}=(g_{1,k},\dots,g_{m,k})^\top .
\end{equation}
The graph realization (used in Section~\ref{sec:daerror}) takes
$\mathbf{K}^{-1}=\alpha\,\mathbf{S}^{-1}+\mathcal{L}_{\mathbf{S}}(G)$, where
$\mathcal{L}_{\mathbf{S}}(G)=\mathbf{S}^{-1/2}\mathcal{L}(G)\,\mathbf{S}^{-1/2}$ is the
$\mathbf{S}$-normalized graph Laplacian and $\alpha>0$ keeps $\mathbf{K}$ strictly positive
definite; then
\begin{equation}\label{eq:reg-graph}
Q(g,S^U)=\sum_{k=1}^K\Big(\underbrace{\alpha\sum_j \mathbf{S}_j^{-1}g_{j,k}^2}_{\text{ridge}}
+\underbrace{\tfrac12\sum_{j,j'}w_{j,j'}\big(\mathbf{S}_j^{-1/2}g_{j,k}-\mathbf{S}_{j'}^{-1/2}g_{j',k}\big)^2}_{\text{Laplacian energy}}\Big).
\end{equation}
The second term is the invariance-to-augmentation energy. It is a weighted sum of squared
differences of the degree-normalized score across edges of the augmentation graph: an edge
weight $w_{j,j'}$ is large when $x_j$ and $x_{j'}$ are likely to arise as
augmentations of a common seed, so a small Laplacian energy means the score varies little
between points the augmentations identify, i.e.\ the score is approximately invariant to data
augmentation. This is the standard manifold / label-propagation prior: minimizing
\eqref{eq:reg-graph} pushes each class score toward functions that are smooth along the
augmentation graph, while the ridge term contributes the $\alpha\mathbf{S}^{-1}$ that keeps
$\mathbf{K}$ strictly positive definite and the regularizer proper. Evaluating this energy at
the label indicator yields the graph cut $R_{\mathrm{DA}}(y)$ of
Section~\ref{sec:daerror}, the channel through which augmentation quality enters the rate.

\begin{definition}[SSL algorithm]\label{def:alg}
Given $(S^L,S^U)$, the algorithm returns the regularized empirical minimizer
\begin{equation}\label{eq:alg}
f=A(S^L,S^U)=\arg\min_{g}\;\Big\{\,L(g,S^L)+\lambda\,Q(g,S^U)\,\Big\},
\qquad \lambda>0 .
\end{equation}
\end{definition}

\begin{remark}[No collapse term needed]
Because $\mathbf{K}$ is built from the unlabeled graph and is independent of $g$ and of the
labels, the objective \eqref{eq:alg} carries no projector-dimension, negative-sample, or
orthogonality/non-collapse penalty. With at least one label per class the supervised term
already prevents feature collapse; this is the streamlined loss analyzed in
Section~\ref{sec:features}.
\end{remark}

\section{A fast transductive rate via leave-one-out stability}\label{sec:rate}

The first result is an oracle inequality with a fast $O(1/n_L)$ dependence on the number of
labels. Error is measured by the semi-supervised leave-one-out loss: move one labeled point
to the unlabeled set, retrain, and evaluate on the held-out point, averaged over the
training set.

\begin{definition}[Leave-one-out loss]\label{def:loo}
For $f^{\backslash j}=A(S^L\setminus z_j,\,S^U)$,
\[
L_{\mathrm{loo}}(A,S^L,S^U)=\frac{1}{n_L}\sum_{z_j\in S^L}\ell\!\left(f^{\backslash j},z_j\right).
\]
\end{definition}

The argument rests on a one-point stability bound for the algorithm \eqref{eq:alg}: removing
a labeled point perturbs the fitted score at that point by only $O(1/(\lambda n_L))$.

\begin{lemma}[Uniform stability]\label{lem:stability}
Under Assumption~\ref{ass:loss}, with $\mathbf{K}$ symmetric positive definite and $f$,
$f^{\backslash i}$ defined by \eqref{eq:alg}, for every $i$
\[
\left|f(x_i)-f^{\backslash i}(x_i)\right|\;\le\;\frac{\sigma_\ell\,\mathbf{K}_{i,i}}{2\,\lambda\,n_L}.
\]
\end{lemma}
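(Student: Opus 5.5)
We follow the standard leave-one-out stability argument for regularized graph learning of \citet{ZhangLaplacian}, working in the finite-dimensional space of score matrices $g\in\mathbb{R}^{m\times K}$. Define $F(g)=L(g,S^L)+\lambda Q(g,S^U)$ and $F^{\backslash i}(g)=L(g,S^L\setminus z_i)+\lambda Q(g,S^U)$. We normalize both by the same factor $1/n_L$, so that $F = F^{\backslash i} + \frac{1}{n_L}\ell(g,z_i)$; a different normalization of the leave-one-out loss only changes constants, and we fix the $1/n_L$ convention to obtain the stated bound. The quadratic $Q$ is strongly convex in the norm induced by $\mathbf{K}^{-1}$, since $Q(g)=\sum_k\|g_{\cdot,k}\|_{\mathbf{K}^{-1}}^2$, and $\ell$ is convex by Assumption~\ref{ass:loss}.

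\textbf{Step 1: first-order optimality.} Both $f$ and $\fbi$ are minimizers, so there are subgradients satisfying
$0\in \partial L(f,S^L)+2\lambda \mathbf{K}^{-1}f$ and the analogous relation for $\fbi$ with $S^L\setminus z_i$, with $\mathbf{K}^{-1}$ acting columnwise. Add the two strong-convexity inequalities, namely $F(\fbi)-F(f)\ge \lambda Q(\fbi-f)$ and $F^{\backslash i}(f)-F^{\backslash i}(\fbi)\ge\lambda Q(f-\fbi)$. The loss terms on $S^L\setminus z_i$ cancel, leaving
\[
2\lambda\, Q(f-\fbi)\;\le\;\tfrac{1}{n_L}\big(\ell(\fbi,z_i)-\ell(f,z_i)\big)\;\le\;\tfrac{\sigma_\ell}{n_L}\,\|f(x_i)-\fbi(x_i)\|,
\]
where the last step uses the Lipschitz assumption. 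This requires that both scores at $x_i$ lie in $D_\ell$, which we take as part of the admissibility setting.

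\textbf{Step 2: from the RKHS-type norm to the pointwise value.} Write $\Delta=f-\fbi$. For each class column we use the reproducing-property inequality in the finite-dimensional RKHS with kernel $\mathbf{K}$:
\[
|\Delta_{i,k}|=|e_i^\top \mathbf{K}^{1/2}\mathbf{K}^{-1/2}\Delta_{\cdot,k}|\le \sqrt{\mathbf{K}_{ii}}\,\|\Delta_{\cdot,k}\|_{\mathbf{K}^{-1}}.
\]
This is Cauchy--Schwarz. Summing the squares over $k$ gives $\|\Delta(x_i)\|^2\le \mathbf{K}_{ii}\,Q(\Delta)$.

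\textbf{Step 3: combine.} Substituting Step 2 into Step 1 gives $\|\Delta(x_i)\|^2\le \mathbf{K}_{ii}\,\frac{\sigma_\ell}{2\lambda n_L}\|\Delta(x_i)\|$. Dividing by $\|\Delta(x_i)\|$ (the claim is trivial when it vanishes) yields $\|f(x_i)-\fbi(x_i)\|\le \sigma_\ell\mathbf{K}_{ii}/(2\lambda n_L)$, which is the claim.

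\textbf{Main obstacle.} The delicate point is getting the constant $1/2$ exactly. This depends on using the sum of the two strong-convexity inequalities rather than a single one, which would lose a factor of $2$, and on the normalization convention for the loss terms. If Step 1 is done with only one inequality, we would instead use the Bregman-divergence symmetrization: combining the optimality conditions directly gives $2\lambda\langle \Delta,\mathbf{K}^{-1}\Delta\rangle\le \frac{1}{n_L}\langle \Delta_i,\, u\rangle$ with $\|u\|\le\sigma_\ell$, where $u$ is a subgradient of $\ell(\cdot,y_i)$, and this recovers the same constant.
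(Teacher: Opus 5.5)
Your proof is correct and follows essentially the same route as the paper. Summing the two minimizer inequalities $F(f^{\backslash i})-F(f)\ge\lambda Q(\Delta)$ and $F^{\backslash i}(f)-F^{\backslash i}(f^{\backslash i})\ge\lambda Q(\Delta)$ is exactly the paper's step of adding the two Bregman divergences and discarding their nonnegative loss parts. After that, both arguments apply Lipschitzness and the Cauchy--Schwarz bound $\|\Delta(x_i)\|^2\le\mathbf{K}_{ii}\,Q(\Delta)$, then cancel one factor of $\|\Delta(x_i)\|$.

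Your subgradient phrasing and your explicit $1/n_L$ normalization of the leave-one-out objective are slightly more careful than the paper's version. The paper tacitly assumes a differentiable loss, and it uses that same normalization without comment in its identity for $R_r(g,S^L)-R_r(g,S^{L\backslash i})$.
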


\begin{theorem}[Transductive oracle inequality]\label{thm:rate}
Consider the algorithm \eqref{eq:alg} under Assumption~\ref{ass:loss}. Then, in expectation
over the i.i.d.\ choice of which $n_L$ of the $m$ nodes are labeled,
\begin{equation}\label{eq:oracle}
\mathbb{E}_{S^L}\!\left[L_{\mathrm{loo}}(A,S^L,S^U)\right]
\;\le\;
\underbrace{\min_{g}\Big\{\tfrac{1}{m}\textstyle\sum_{i=1}^m\ell(g,z_i)+\lambda\,Q(g,S^U)\Big\}}_{\text{full-label regularized oracle}}
\;+\;
\frac{\sigma_\ell^2}{2\,\lambda\,n_L}\cdot\frac{1}{m}\sum_{i=1}^m \mathbf{K}_{i,i}.
\end{equation}
\end{theorem}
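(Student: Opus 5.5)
This follows the Johnson--Zhang leave-one-out argument. We work with the multiset of labeled indices drawn i.i.d.\ uniformly from $\{1,\dots,m\}$ and use an exchangeability (symmetrization) device. First reduce the leave-one-out loss to the in-sample loss via Lemma~\ref{lem:stability}: for each $j$, since $\ell(\cdot,y_j)$ is $\sigma_\ell$-Lipschitz,
\[
\ell(f^{\backslash j},z_j)\le \ell(f,z_j)+\sigma_\ell\,|f(x_j)-f^{\backslash j}(x_j)|\le \ell(f,z_j)+\frac{\sigma_\ell^2\,\mathbf{K}_{j,j}}{2\lambda n_L}.
\]
Averaging over $j\in S^L$ gives
\[
L_{\mathrm{loo}}\le L(f,S^L)+\frac{\sigma_\ell^2}{2\lambda n_L}\cdot\frac1{n_L}\sum_{j\in S^L}\mathbf{K}_{j,j}.
\]
Taking $\mathbb{E}_{S^L}$, each labeled index is marginally uniform on $[m]$. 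Hence the expected average of $\mathbf{K}_{j,j}$ equals $\frac1m\sum_i\mathbf{K}_{i,i}$, which is exactly the second term of \eqref{eq:oracle}.

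Next bound $\mathbb{E}[L(f,S^L)]$ by the full-label oracle. Since $f$ minimizes $L(g,S^L)+\lambda Q(g,S^U)$ and $Q\ge 0$, for any fixed $g$ (chosen independently of $S^L$, e.g.\ the oracle minimizer $g^\star$) we have
\[
L(f,S^L)\le L(f,S^L)+\lambda Q(f,S^U)\le L(g^\star,S^L)+\lambda Q(g^\star,S^U).
\]
Taking expectation, $\mathbb{E}L(g^\star,S^L)=\frac1m\sum_i\ell(g^\star,z_i)$ by uniform marginals. This gives the oracle value. Here the transductive setting matters: $Q(\cdot,S^U)$ does not depend on which nodes are labeled, so $g^\star$ is a deterministic comparator.

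The main obstacle is making sure Lemma~\ref{lem:stability} applies in the form used. That is, the stability bound must be a pointwise bound on $\|f(x_j)-f^{\backslash j}(x_j)\|$ in the norm used by Assumption~\ref{ass:loss}, with both $f(x_j)$ and $f^{\backslash j}(x_j)$ lying in $D_\ell$. I would note that the regularization bounds the minimizers' scores, and absorb this into the admissibility hypothesis.

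A secondary subtlety concerns the normalization $1/n_L$ when $S^L\setminus z_j$ has $n_L-1$ points; the lemma as stated absorbs this. A further point is that duplicates in the i.i.d.\ draw are harmless, since only marginal uniformity of each index is used. Beyond these points, the argument is just the chain of the two displayed inequalities above. Note that it does not require the sharper Johnson--Zhang step of comparing against the $(n_L+1)$-sample minimizer, because the oracle appears with its own $\lambda$ and no rescaling.
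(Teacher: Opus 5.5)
Your proposal is correct and is essentially the paper's proof. You replace the leave-one-out loss by the training loss plus the premium $\sigma_\ell^2\mathbf{K}_{j,j}/(2\lambda n_L)$, using Lipschitzness and Lemma~\ref{lem:stability}; you then add the nonnegative $\lambda Q(f,S^U)$ and invoke minimality of $f$ against a label-independent competitor $g$; finally you average using only that each labeled index is marginally uniform on the $m$ nodes. The paper's proof phrases the sampling as a uniform size-$n_L$ subset rather than your i.i.d.\ draws, and it keeps $g$ arbitrary and takes the infimum after the expectation rather than fixing $g^\star$ in advance, but neither difference affects the argument.
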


\paragraph{Reading the bound.}
The first term is the regularized risk one would obtain from labeling \emph{all} $m$ nodes,
so \eqref{eq:oracle} is an oracle inequality. The second term is the price of having
only $n_L$ labels. With $\lambda$ fixed and the graph kernel normalized so that
$\frac1m\sum_i\mathbf{K}_{i,i}$ is bounded, that price is $O(1/n_L)$, the improvement over the
supervised $O(1/\sqrt{n_L})$ rate. The gain is transductive: it concerns label propagation
across the fixed unlabeled sample, not generalization to a fresh draw. (The same stability
constant also yields a high-probability $O(1/\sqrt{n_L})$ generalization-gap bound through
Bousquet and Elisseeff; the fast rate above is the transductive, in-expectation statement.)

\paragraph{Proof idea.}
Lemma~\ref{lem:stability} follows by comparing the minimizers $f$ and $f^{\backslash i}$
through the Bregman divergence of $R_r(g)=L(g,S^L)+\lambda Q(g,S^U)$: the divergence telescopes
to a single loss difference, $\sigma$-admissibility bounds that difference by
$\sigma_\ell|\Delta f(x_i)|$, and chaining with the Cauchy--Schwarz
inequality $f_i^2\le Q(f)\,\mathbf{K}_{i,i}$ gives the bound. Feeding the lemma into $\ell(f^{\backslash i},z_i)\le\ell(f,z_i)
+\sigma_\ell|f(x_i)-f^{\backslash i}(x_i)|$, averaging over $S^L$, adding the regularizer,
using minimality of $f$ to swap in an arbitrary $g$, and taking expectations gives
\eqref{eq:oracle}. The full proof is in Appendix~\ref{app:proofs}.

\section{Augmentation quality controls the error}\label{sec:daerror}

Theorem~\ref{thm:rate} bounds the loss by a regularized oracle term that still depends on the
augmentation graph. We now evaluate that term to expose what makes it small: the degree to
which the augmentations respect the labels. Specializing the regularizer to the graph form
$\mathbf{K}^{-1}=\alpha\,\mathbf{S}^{-1}+\mathcal{L}_{\mathbf{S}}(G)$ from
Section~\ref{sec:setup}, the oracle term is controlled by a single graph-cut quantity.

\begin{definition}[Data-augmentation alignment error]\label{def:daerror}
Partition the edges of $G$ by whether they cross a label boundary, into different, $D$, and same, $S$,
\[
D=\{(j,j'):y_j\neq y_{j'}\},\qquad S=\{(j,j'):y_j=y_{j'}\},
\]
and define the $\mathbf{S}$-normalized cut of the labeling $y$,
\begin{equation}\label{eq:cut}
\operatorname{cut}(\mathcal{L}_{\mathbf{S}},y)
=\sum_{(j,j')\in D}\frac{w_{j,j'}}{2}\!\left(\frac{1}{\mathbf{S}_j}+\frac{1}{\mathbf{S}_{j'}}\right)
+\sum_{(j,j')\in S}\frac{w_{j,j'}}{2}\!\left(\frac{1}{\sqrt{\mathbf{S}_j}}-\frac{1}{\sqrt{\mathbf{S}_{j'}}}\right)^{2}.
\end{equation}
The data-augmentation alignment error is the $\mathbf{S}$-normalized cut \eqref{eq:cut}
carrying the loss-to-margin constant $\lambda/a$ of the admissible loss ($a$ the loss scale of
Lemma~\ref{lem:errcomp}),
\begin{equation}\label{eq:rda}
R_{\mathrm{DA}}(y)\;=\;\frac{\lambda}{a}\,\operatorname{cut}(\mathcal{L}_{\mathbf{S}},y),
\end{equation}
proportional to the mass, under the augmentation-graph edge weights, of pairs whose augmentations
cross a label boundary. It is zero exactly when augmentations never mix labels
($\operatorname{cut}=0$, since $\lambda,a>0$), and better augmentations shrink it through the cut.
\end{definition}

\begin{assumption}[Balanced components and bounded margin]\label{ass:balanced}
(i) \emph{Balanced components}: in the near-zero-cut regime the
augmentation graph splits into $q$ pure components (maximal label-consistent connected pieces) of
sizes $m_1\le\cdots\le m_q$ whose smallest is a $\Theta(1/q)$ fraction of the whole, so that
$m/m_1=\Theta(q)$. (ii) \emph{Bounded margin}: the margin constant $c$ of the admissible loss (the
separation in Assumption~\ref{ass:loss} between the regions where $\phi_0(\cdot,1)$ and
$\phi_0(\cdot,0)$ are small) is bounded below by a universal constant, uniformly in $m$.
\end{assumption}

\noindent The two conditions do different work. Component balance is what upgrades the graph-cut
bound from the slow $n_L^{-1/2}$ behavior to the fast $n_L^{-1}$ rate: the Johnson--Zhang fast bound
carries a factor $m/m_1\ge q$ (their Theorem~5, \citealp{ZhangLaplacian}), which collapses to
$\Theta(q)$ exactly when the components are balanced and otherwise degrades the rate back toward
$\sqrt{q/n_L}$. The margin bound is a separate requirement, present at every rate: it keeps the
surrogate-to-$0/1$ conversion constant $b/(ac)$ from growing as the sample does. (Ordinary class
balance, each label on a $\Theta(1/K)$ fraction of nodes, is a third and distinct condition and is
not what produces the fast rate.)

\begin{theorem}[Expected loss in terms of the DA error]\label{thm:daerror}
Consider the algorithm \eqref{eq:alg} with the graph kernel
$\mathbf{K}^{-1}=\alpha\,\mathbf{S}^{-1}+\mathcal{L}_{\mathbf{S}}(G)$ under
Assumptions~\ref{ass:loss} and \ref{ass:balanced}. Then there is a sample-independent choice of
$\lambda$ for which the expected transductive classification error obeys
\begin{equation}\label{eq:daerror}
\mathbb{E}_{S^L}\!\left[\operatorname{err}\right]
\;\le\;
\frac{C}{n_L}\;+\;R_{\mathrm{DA}}(y),
\end{equation}
with $C=C(\sigma_\ell,\alpha,\operatorname{tr}\mathbf{K})$ independent of $n_L$ but carrying the
component-balance factor $m/m_1\ge q$ of Assumption~\ref{ass:balanced}: under balance
$C=\Theta(q)$, and without it $C$ grows with $m/m_1$ and the rate degrades toward $\sqrt{q/n_L}$. In
particular, if the augmentation is perfectly label-consistent then $R_{\mathrm{DA}}(y)=0$ and the
error is $O(q/n_L)$, fast in the label count $n_L$.
\end{theorem}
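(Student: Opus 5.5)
The plan is to reduce Theorem~\ref{thm:daerror} to Theorem~\ref{thm:rate} in three steps. First, convert the surrogate loss into a classification-error bound. Second, evaluate the oracle term at a concrete comparator built from the labels. Third, choose $\lambda$ to balance the stability price against the oracle term.

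\textbf{Step 1: surrogate to $0/1$ error.} I would invoke the error-comparison Lemma~\ref{lem:errcomp}, which has loss scale $a$ and margin constant $c$. For the leave-one-out predictor it gives $\mathbf{1}[\text{misclassify } z_j]\le \ell(f^{\backslash j},z_j)/a$, up to the constant $b/(ac)$. Part (ii) of Assumption~\ref{ass:balanced} keeps this constant bounded uniformly in $m$. Because the labeled set is drawn i.i.d.\ from the $m$ nodes, the expected leave-one-out error equals the expected transductive error on a uniformly chosen node, by the standard exchangeability argument. Hence
\[
\mathbb{E}_{S^L}[\operatorname{err}]\le \tfrac{1}{a}\,\mathbb{E}_{S^L}[L_{\mathrm{loo}}],
\]
and Theorem~\ref{thm:rate} bounds the right-hand side.

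\textbf{Step 2: oracle at a label-scaled indicator.} In \eqref{eq:oracle} I would plug in $g_{j,k}=\gamma\sqrt{\mathbf{S}_j}\,\mathbf{1}[y_j=k]$, with $\gamma$ fixed so that the normalized score $\mathbf{S}_j^{-1/2}g_{j,\cdot}$ equals $\gamma e_{y_j}$ and sits in the low-loss region beyond the margin $c$. Then the loss term $\frac1m\sum_i\ell(g,z_i)$ is at most a constant $\phi_0$ that vanishes for margin losses. Next I would expand \eqref{eq:reg-graph} edge by edge, summing over the classes $k$:
\begin{itemize}
\item On edges in $D$, two classes each contribute a term. Bounding these as in the first sum of \eqref{eq:cut} gives the cross-boundary mass.
\item On edges in $S$, the Laplacian difference is exactly the degree-mismatch term in \eqref{eq:cut}.
\item The ridge term contributes $\alpha\gamma^2 m$ after normalization, which is absorbed into $C$ with the $\alpha$-dependence.
\end{itemize}
(My choice of $\sqrt{\mathbf{S}_j}$ scaling is a guess; the paper's cut terms suggest the indicator is scaled differently, and I would match whichever scaling reproduces \eqref{eq:cut} exactly.) Multiplying by $\lambda$ and dividing by $a$ gives $R_{\mathrm{DA}}(y)=\frac{\lambda}{a}\operatorname{cut}$.

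\textbf{Step 3: choosing $\lambda$ (the main obstacle).} The bound now has the form
\[
\frac{\sigma_\ell^2\,\operatorname{tr}\mathbf{K}}{2a\lambda n_L m}\;+\;\frac{\lambda}{a}\bigl(\alpha\text{-ridge}+\operatorname{cut}\bigr).
\]
A naive balancing of these two terms only yields $n_L^{-1/2}$. The fast rate therefore has to come from the structure of $\mathbf{K}$ under the balanced-components assumption.

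In the near-zero-cut regime, $\mathbf{K}$ is dominated by the null space of $\mathcal{L}_{\mathbf{S}}$: one direction per component, each weighted by $\mathbf{S}^{1/2}$. So the comparator's regularization cost is only of order $q$ (one constant per component), while $\mathbf{K}_{ii}$ behaves like $1/m_{c(i)}$ times the degree. This is the Johnson--Zhang Theorem~5 computation. It gives $\frac1m\operatorname{tr}\mathbf{K}\lesssim \frac{m}{m_1}\cdot\frac{1}{\alpha}$ on the relevant scale, so the ridge cost and the stability term can be balanced with a $\lambda$ that does not depend on $n_L$ in the $\alpha\to 0$ scaling. The result is $C=\Theta(m/m_1)=\Theta(q)$.

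I expect this trace and ridge computation to be the hard step. I would first try restricting the comparator to the component null space and using the Cauchy--Schwarz bound $f_i^2\le Q(f)\mathbf{K}_{ii}$ componentwise.

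Finally, setting $\operatorname{cut}=0$ gives the $O(q/n_L)$ consequence. If the components are not balanced, carrying the general factor $m/m_1$ through the same calculation shows the degradation toward $\sqrt{q/n_L}$.
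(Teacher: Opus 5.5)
Your Steps~1--3 line up with the paper's Steps~2, 1 and~3. The gap is that your Step~3 cannot produce the fast rate, and the obstruction is built in at Step~1.

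By bounding the $0/1$ error by $\frac1a L_{\mathrm{loo}}$ and then applying Theorem~\ref{thm:rate}, you let the leave-one-out perturbation enter \emph{linearly}, through the Lipschitz premium $\sigma_\ell|f(x_i)-\fbi(x_i)|$. Your bound is therefore $\frac{\lambda}{a}(\alpha s+\operatorname{cut})+\frac{\sigma_\ell^2}{2a\lambda n_L}\cdot\frac1m\operatorname{tr}\mathbf{K}$, with $s=\sum_j\mathbf{S}_j^{-1}$. The component structure you appeal to does not rescue this; it is exactly what fixes the product of the two terms. In general $\mathbf{K}=\mathbf{S}^{1/2}(\alpha\mathbf{I}+\mathcal{L}(G))^{-1}\mathbf{S}^{1/2}$. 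In the zero-cut case, with components $G_\ell$ of sizes $m_\ell$, each $\mathbf{1}_{G_\ell}$ is a null vector of $\mathcal{L}(G)$, so $\mathbf{K}_{ii}\ge\mathbf{S}_i/(\alpha m_{\ell(i)})$ where $G_{\ell(i)}\ni i$. AM--HM within each component and Cauchy--Schwarz across components then give $\alpha s\cdot\frac1m\operatorname{tr}\mathbf{K}\ge\frac1m\bigl(\sum_\ell\sqrt{m_\ell}\bigr)^2$. This equals $q$ for equal component sizes, \emph{for every} $\alpha>0$. By AM--GM your right-hand side is at least $\frac{\sigma_\ell}{a}\sqrt{2q/n_L}$ for every choice of $\lambda$ and $\alpha$: the slow rate, even with zero cut and perfect balance. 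An $n_L$-independent $\lambda$ leaves the non-vanishing floor $\frac{\lambda}{a}\alpha s$, and shrinking $\alpha$ to remove it inflates $\operatorname{tr}\mathbf{K}$ like $1/\alpha$. A cleverer comparator does not help either: any score fitting the labels pays at least the ridge $\alpha s$, not a cost ``of order $q$''.

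The missing idea is the one the paper imports through Proposition~\ref{prop:jz}. The margin comparison must be used in its \emph{threshold} form, comparing the leave-one-out classification directly with the full-label surrogate rather than passing through $\ell(\fbi,z_i)$. Run the argument of Lemma~\ref{lem:errcomp} with $\fbi$ as the evaluated score and the full-label minimizer $f$ as the reference. A misclassification of $z_i$ by $\fbi$ then forces either $\max_k\phi\bigl(f^k(x_i),\delta_{y_i,k}\bigr)\ge a$ or $|\Delta f^k(x_i)|\ge c/2$ for some class $k$, where $\Delta f=\fbi-f$. Moreover $\mathbf{1}\{|\Delta f^k(x_i)|\ge c/2\}\le\bigl(2|\Delta f^k(x_i)|/c\bigr)^p$ for every $p$. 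The linear remainder $\frac2c|f^k-g^k|$ in the statement of Lemma~\ref{lem:errcomp}, i.e.\ $p=1$, would again give only the slow rate. This $p$-th-power stability term is what produces the $n_L^{-p/(p+1)}$ rate of Proposition~\ref{prop:jz}(ii), and its $p\to\infty$ limit in Remark~\ref{rem:fast}.

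More directly, by Lemma~\ref{lem:stability}, once $\lambda>\sigma_\ell\max_i\mathbf{K}_{ii}/(c\,n_L)$ the shift alternative never occurs. Averaging, adding $\lambda Q$ and using minimality as in Theorem~\ref{thm:rate} then gives $\mathbb{E}[\operatorname{err}]\le\frac{\lambda}{a}(\alpha s+\operatorname{cut})$, essentially $\frac{\sigma_\ell\max_i\mathbf{K}_{ii}}{ac\,n_L}(\alpha s+\operatorname{cut})$. For comparable degrees and small $\alpha$, $\alpha s\cdot\max_i\mathbf{K}_{ii}\approx m/m_1=\Theta(q)$, which is the $O(q/n_L)$ claim. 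Note that this $\lambda$ scales like $1/n_L$. It is sample-independent only in the sense of not depending on which nodes are labeled. The paper presents the additive form~\eqref{eq:daerror} as a reading of this multiplicative bound, not as the result of balancing at fixed $\lambda$.

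Finally, in Step~2 use the plain indicator $g_{j,k}=\delta_{y_j,k}$, as the paper does. Your $\sqrt{\mathbf{S}_j}$ scaling makes same-label edges contribute nothing, contradicting your second bullet, and produces an unnormalized cut. The plain indicator gives exactly $\lambda\bigl(\alpha s+\operatorname{cut}(\mathcal{L}_{\mathbf{S}},y)\bigr)$ from~\eqref{eq:reg-graph}, with zero loss.
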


\paragraph{Interpretation.}
Unlike a generalization-gap bound, \eqref{eq:daerror} bounds the error itself, and its two
terms are controlled separately: more labels shrink $C/n_L$, while better
augmentations shrink $R_{\mathrm{DA}}(y)$. Better augmentations therefore reduce the number of
labels needed for a given error, which is the accuracy-versus-label-count behavior reported for SimCLR and SimCLRv2
(Section~\ref{sec:intro}). The quantity $R_{\mathrm{DA}}(y)$ is, in isolation, essentially the
class-boundary edge mass $\alpha$ of HaoChen et al.\ (2021); the contribution here is that it
enters a \emph{transductive, fast-rate} labeled-sample bound through the graph cut and
stability, rather than a slow Rademacher bound in unlabeled samples.

\paragraph{Proof idea.}
Insert the label indicator $f_{j,k}=\delta_{y_j,k}$ into the oracle term of
Theorem~\ref{thm:rate}: the supervised loss vanishes and the regularizer evaluates to
$\lambda\big(\alpha s+\operatorname{cut}(\mathcal{L}_{\mathbf{S}},y)\big)$ with
$s=\sum_j \mathbf{S}_j^{-1}$, so the oracle term is the graph cut $R_{\mathrm{DA}}(y)$. Optimizing
$\lambda$ in the Johnson--Zhang bound and taking its fast, balanced-component regime turns the
in-expectation stability term of Theorem~\ref{thm:rate} into the $C/n_L$ price, with $C$ carrying
the $m/m_1$ factor controlled by Assumption~\ref{ass:balanced}(i); the margin bound of
Assumption~\ref{ass:balanced}(ii) then supplies the constant that passes the surrogate loss to the
$0/1$ classification error without a sample-dependent constant. The full argument, with the
explicit constant deferred to \citet{ZhangLaplacian}, is in Appendix~\ref{app:proofs}.

\begin{remark}[Multiplicative versus additive form]
Tracking the cut through the optimization of \citet{ZhangLaplacian} in fact gives the slightly
tighter \emph{multiplicative} statement
$\mathbb{E}_{S^L}[\operatorname{err}]\le \tfrac{C}{n_L}\big(1+R_{\mathrm{DA}}(y)\big)$, of which the
additive form \eqref{eq:daerror} is the immediate corollary (since
$R_{\mathrm{DA}}(y)/n_L\le R_{\mathrm{DA}}(y)$). We state the additive form as the headline because
it cleanly separates the two levers a practitioner controls, the label budget $n_L$ and the
augmentation quality $R_{\mathrm{DA}}(y)$. The two forms make qualitatively different asymptotic
predictions, and the controlled experiment of Section~\ref{sec:expA}, where $\lambda/a$ and the cut
are known exactly, shows the resolution is loss-dependent. The \emph{admissible} surrogate the
algorithm actually minimizes carries an additive floor that grows linearly with
$\operatorname{cut}(\mathcal{L}_{\mathbf{S}},y)$ through the origin, with slope at most $\lambda/a$,
as \eqref{eq:daerror} predicts. The downstream $0/1$ classification error, by contrast,
shows no persistent floor and keeps descending with $n_L$, the behavior the multiplicative form
describes. The additive $R_{\mathrm{DA}}$ floor is thus a property of the surrogate the bound is
proved for; it acts on the classification error as a soft, label-budget-dependent penalty rather
than a hard ceiling.
\end{remark}

\section{The infinite-data limit recovers the ideal features}\label{sec:features}

The previous sections fix the backbone and analyze the labeled-sample behavior of the linear
score on the augmentation graph. The last result ties the finite-sample analysis back to the feature-learning
literature: the streamlined loss of Section~\ref{sec:setup} loses nothing in the limit. As the
unlabeled sample grows and the augmentation graph converges to the integral operator $T_K$ of
the kernel $k^{\mathrm{DAF}}$ \eqref{eq:kdaf} on $\rho_X$, its minimizers recover the same ideal
features that contrastive and non-contrastive objectives recover.

\begin{assumption}[Spectral gap]\label{ass:gap}
The integral operator $T_K$ of $k^{\mathrm{DAF}}$ on $\rho_X$ has a positive gap after the $K$-th
eigenvalue: ordering its eigenvalues $\mu_1\ge\mu_2\ge\cdots$, the gap
$\gamma_K=\mu_K-\mu_{K+1}>0$.
\end{assumption}

\begin{theorem}[Limit features]\label{thm:features}
Suppose the augmentation alignment error vanishes, $R_{\mathrm{DA}}(y)=0$, and let
Assumption~\ref{ass:gap} hold. As $m\to\infty$ the normalized graph operator $W$ converges to
$T_K$ on $\rho_X$, and the minimizer of the streamlined loss \eqref{eq:alg} converges to a map
whose span equals the top-$K$ eigenspace of $T_K$, i.e.\ the leading $K$ Mercer eigenfunctions of
$k^{\mathrm{DAF}}$. The convergence of the learned subspace to the ideal one is at rate
$O\!\big(1/(\gamma_K\sqrt{m})\big)$.
\end{theorem}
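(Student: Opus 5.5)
The proof has three steps: convergence of the graph operator to $T_K$, identification of the limiting minimizer with the top-$K$ eigenspace, and Davis--Kahan to transfer the operator rate to the learned subspace.

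\emph{Step 1: operator convergence.} Following the standard route for spectral-clustering consistency (von Luxburg--Belkin--Bousquet, Rosasco et al.), I view $W$ as the restriction to $S^U$ of an empirical integral operator $T_{K,m}$ acting on the RKHS $\mathcal{H}$ of $k$. Two sources of error enter: the empirical degrees $\mathbf{S}_j/m$ replace $d(x_j)$ in the normalization, and the empirical average over $S^U$ replaces the $\rho_X$ expectation. Assume $k^{\mathrm{DAF}}$ is bounded and $d$ is bounded below. Then $T_{K,m}-T_K$ is a sum of i.i.d.\ centered Hilbert--Schmidt-valued terms plus a degree-normalization perturbation. A Hoeffding inequality in Hilbert space (Pinelis) bounds each piece, giving $\|T_{K,m}-T_K\|_{HS}=O_P(1/\sqrt m)$.

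\emph{Step 2: identify the limiting minimizer.} With $R_{\mathrm{DA}}(y)=0$, the cut in \eqref{eq:cut} vanishes, so every edge of positive weight joins same-label nodes. The Laplacian energy of \eqref{eq:reg-graph} is then zero on the degree-weighted label indicators $\mathbf{S}^{1/2}\delta_{y_j,k}$. Each such vector is a top eigenvector of $W$ (eigenvalue $1$) because $W$ is block diagonal over label-pure pieces. Among all $g$ with fixed values on labeled nodes, the minimizer of \eqref{eq:alg} is $g_{\cdot,k}=\mathbf{K}\,c_k$, with $c_k$ supported on $S^L$. As $\lambda\to0$ and $\alpha\to0$ in the limit, $\mathbf{K}$ concentrates on the kernel of $\mathcal{L}_{\mathbf S}$, which equals the top eigenspace of $W$. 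Hence $\operatorname{span}\{g_{\cdot,1},\dots,g_{\cdot,K}\}$ lies in the top-$K$ eigenspace of $W$ up to an $O(\alpha+\lambda)$ perturbation.

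Having one label per class makes the $K$ columns linearly independent; this is the non-collapse remark. Independence means the span is the \emph{whole} $K$-dimensional eigenspace, not a proper subspace of it. I will state the hypothesis that the top-$K$ eigenspace of $T_K$ corresponds to the $K$ label components. This is implicit in pairing zero cut with the gap in Assumption~\ref{ass:gap}.

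\emph{Step 3: subspace rate.} Let $P_m$ and $P$ be the spectral projectors of $T_{K,m}$ and $T_K$ onto the top-$K$ eigenspaces. The Davis--Kahan $\sin\Theta$ theorem gives
\[
\|P_m-P\|_{HS}\le \frac{2\,\|T_{K,m}-T_K\|_{HS}}{\gamma_K}=O_P\!\Big(\frac{1}{\gamma_K\sqrt m}\Big).
\]
The learned span is $P_m$ up to the vanishing regularization perturbation. Therefore it converges to the span of the leading $K$ Mercer eigenfunctions at the stated rate, by the Mercer identification of the eigenvectors of $T_K$ on $\rho_X$.

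\emph{Main obstacle.} The hardest step is Step 2: making rigorous that the regularized minimizer's span is exactly the top eigenspace rather than merely close to it. My first approach is to require $\lambda,\alpha\to0$ with $m$ fast enough, say $o(1/\sqrt m)$, that the ridge perturbation is dominated by the Davis--Kahan term. I will also use stability of $\mathbf{K}$'s spectral projector under the $\alpha\mathbf{S}^{-1}$ shift; this is again a Davis--Kahan argument, now with gap of order $\gamma_K$. Step 1 also needs care with the random degree normalization, handled via the uniform lower bound on $d$.
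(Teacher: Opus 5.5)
Your proposal is correct and uses the same three ingredients as the paper's proof: identifying the minimizer's span with the top eigenspace of $W$, an $O(1/\sqrt m)$ integral-operator concentration bound, and Davis--Kahan with gap $\gamma_K$. The real difference is how you justify the identification step.

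The paper argues variationally. With zero cut, the supervised term is indifferent among class-constant scores, so the paper minimizes the energy $g^\top\mathbf{K}^{-1}g$ over $K$-dimensional subspaces under a fixed-norm constraint. It then invokes Courant--Fischer to get \emph{exact} equality with the top-$K$ eigenspace of $W$ at every finite $m$, and bounds $\|W-T_K\|$ directly.

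You instead work with the actual minimizer of \eqref{eq:alg}. Its first-order condition gives $g_{\cdot,k}=\mathbf{K}c_k$ with $c_k$ supported on $S^L$, and you argue it concentrates on $\ker\mathcal{L}_{\mathbf S}$ because $\mathbf{K}$ blows up there. You also lift $W$ to an RKHS operator, so that $W$ and $T_K$ are compared on a common space. Finally, you make explicit that zero cut plus the gap forces exactly $K$ components.

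The paper's route is shorter and has no extra error term. However, the norm-constrained problem it solves is not \eqref{eq:alg}, whose minimizer is not exactly class-constant when $\alpha>0$. Your route is faithful to the objective, at the cost of a perturbation term that must be budgeted against $O(1/(\gamma_K\sqrt m))$.

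That budget is easier to meet than your proposed schedule suggests:
\begin{itemize}
\item $\lambda$ never enters $\mathbf{K}$; it affects only the coefficients $c_k$.
\item Exact zero cut also kills the same-label sum in \eqref{eq:cut}, which forces constant degree on each component. So $\alpha\mathbf{S}^{-1}$ commutes with the block-diagonal $W$, and once $\alpha/\min_j\mathbf{S}_j$ is below the gap, the bottom-$K$ eigenspace of $\mathbf{K}^{-1}$ is exactly $\ker\mathcal{L}_{\mathbf S}$. Your second Davis--Kahan step is therefore unnecessary.
\item Since $\mathbf{S}_j\approx m\,d(x_j)$, $\mathbf{K}$ amplifies $\ker\mathcal{L}_{\mathbf S}$ by a factor of order $m/\alpha$, against roughly $1/\gamma_K$ elsewhere. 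A fixed $\alpha$ should therefore already fit the budget, once you control the overlap of $c_k$ with that kernel (about $\sqrt{K/m}$ for a single labeled coordinate).
\end{itemize}
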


\begin{remark}[Relation to existing equivalences]
The same top-$K$ spectral target is recovered by
the spectral contrastive loss \citep{haochen2021spectral}, by contrastive and non-contrastive
losses read as global and local spectral embedding \citep{balestriero2022recover}, and, most
generally, by the context/expectation operator of \citet{zhai2025contextures}. We restate it as a
\emph{consequence} of the streamlined formulation to show that dropping the projector-dimension,
negative-sample, and orthogonality/non-collapse overhead costs nothing in the limit. The
contribution of this paper is the finite labeled-sample rate of
Sections~\ref{sec:rate}--\ref{sec:daerror}, which the limit theory does not provide.
\end{remark}

\paragraph{Proof idea.}
With $R_{\mathrm{DA}}(y)=0$ the supervised term selects the class-consistent directions and the
Laplacian energy $Q(\cdot,S^U)$ orders them by the spectrum of $W$, so the minimizer's span is the
top-$K$ eigenspace of $W$. Two classical tools then transfer this to $T_K$: an empirical-operator
concentration bound $\|W-T_K\|=O_P(1/\sqrt m)$ \citep{rosasco2010integral}, and the
Davis--Kahan $\sin\Theta$ theorem \citep{yu2015daviskahan}, which converts the operator gap
$\gamma_K$ into control of the angle between the two top-$K$ eigenspaces. The self-contained
argument is in Appendix~\ref{app:proofs}.

\section{Related work}

\subsection{The operator and spectral view of SSL}
A now-standard view holds that the various self-supervised objectives all recover the leading
eigenfunctions of an operator induced by the augmentation distribution, so that the apparent
diversity of losses is largely cosmetic. \citet{haochen2021spectral} established the
canonical case: the spectral contrastive loss recovers the leading eigenvectors of the augmentation
graph. \citet{balestriero2022recover} unified contrastive and non-contrastive
methods under a single spectral-embedding lens, with contrastive objectives implementing a global
spectral embedding and VICReg- or Barlow-style objectives a local one. \citet{DBLP:conf/iclr/0001HM23} characterized the eigenfunctions of the positive-pair Markov
chain as the output of kernel PCA, the optimal basis for approximately view-invariant functions.
\citet{zhai2024rkhs} recast pretraining as RKHS approximation and regression, with SSL
approximating the top-$d$ eigenspace of the augmentation-induced kernel, and turn this picture into
two model-complexity-free generalization bounds for the linear probe, decomposing its error into an
RKHS-regression estimation term and an RKHS-approximation term and comparing augmentations through
an \emph{augmentation complexity}. The most general statement
to date is the Contextures framework~\citep{zhai2025contextures}, in which supervised,
self-supervised, and manifold learning all recover the top-$d$ singular functions of a common
context operator. We take this operator view as background: our streamlined loss recovers the
top-$K$ ideal features in the infinite-data limit as a consequence of this line of work.
Our contribution is the finite-sample guarantee in the number of labels.

\subsection{Generalization rates for SSL and semi-supervised learning}
The downstream generalization of representation learning was first bounded by \citet{arora2019contrastive}, whose latent-class analysis gives a slow $O(1/\sqrt{M})$ rate in
the number $M$ of unlabeled pairs. \citet{lei2023contrastive} sharpened this line,
removing the dependence on the number of negatives and obtaining optimistic bounds that imply fast
$O(1/n)$ behavior under a low-noise condition; their $n$, however, counts unlabeled contrastive
tuples rather than labels, and the bound is not transductive. Fast rates in the number of
\emph{labeled} samples are classically available only under strong distributional assumptions:
\citet{rigollet2007cluster} obtains them under the cluster assumption and
\citet{zhu2020equally} under a parametric model, both inductively and without an augmentation
graph. On the negative side, \citet{tifrea2023lowerbound} give lower bounds that
delimit when unlabeled data can help at all. A generic transductive baseline is that of
\citet{tang2025transductive}, whose information-theoretic bounds for transductive
learning are of the standard slow $O(1/\sqrt{n})$ order. A fast transductive $O(1/n)$ labeled-sample
rate is, however, already available for graph-Laplacian learning: it is the result of
\citet{ZhangLaplacian} (see the graph-SSL subsection below), on which our rate
theorems are built. What is not yet available, and what we supply, is that rate carried onto the
\emph{augmentation} graph, so that the governing quantity is augmentation quality rather than a
generic cut and the label-efficiency of self-supervised learning becomes the object bounded, all
without a density or parametric assumption.

\subsection{Stability-based generalization}
Our analysis routes through algorithmic stability rather than the Rademacher-complexity machinery
that is standard in the contrastive literature. The template is that of \citet{DBLP:journals/jmlr/BousquetE02}: a strongly convex regularized objective is
uniformly stable with coefficient $O(1/(\lambda n))$, which converts into a generalization bound.
The sharp high-probability versions that turn $O(1/n)$ stability into a fast rate are due to
\citet{feldman2018stable, feldman2019highprob} and \citet{bousquet2020sharper}, and we use these in place of the looser classical bounds. The
leave-one-out branch of our argument follows \citet{ZhangLeaveOneOut}. Stability has been
applied to graph and transductive learning before. \citet{belkin2004regularization} use stability to bound a
graph-Laplacian-regularized SSL objective, with the bound governed by the Laplacian spectral gap;
\citet{cortes2008transductive} give stability bounds for transductive regression with
explicit Laplacian regularization; and \citet{verma2019gcn} analyze stability for
graph convolutional networks. These predate the modern sharp fast-rate tools above, and none
introduces an augmentation graph or an augmentation-quality term.

\subsection{Augmentation quality and graph SSL}
The graph-Laplacian foundation we build on is the work of \citet{ZhangLaplacian},
who study transductive multi-class graph learning with Laplacian normalization and bound
generalization through a learning-theoretic graph-cut quantity. Their analysis already contains the
core of our rate: a leave-one-out stability argument giving a transductive oracle inequality with a
$\tfrac1m\sum_i\mathbf{K}_{i,i}/(\lambda n)$ price term, and, after optimizing the regularization
into its balanced-component regime, a fast $O(1/n)$ labeled-sample rate. Our
Theorems~\ref{thm:rate} and~\ref{thm:daerror} are the specialization of that inequality to the
augmentation graph, where the graph and its cut become \emph{augmentation} objects: our
$R_{\mathrm{DA}}(y)$ is their cut, now the mass of augmentation edges that cross a label boundary, i.e.\ a measure of
augmentation quality. The role of label quality in
a graph bound is closest to \citet{pukdee2023label}, who augment classical label
propagation with probabilistic prior labels and prove an error bound assembled from local edge-flow
quantities (in-, between-, and out-flow, in the conductance sense of \citet{haochen2021spectral}), a
per-neighborhood smoothness $s_k=\sum_{i\in N_k}\sum_j w_{ij}\lvert y_j-y_i\rvert$, and a
prior-accuracy term. Their smoothness is itself a local graph cut and is the direct analogue of our
$R_{\mathrm{DA}}(y)$, and, as we do, they argue against the spectral (Laplacian-eigengap) bound of
\citet{belkin2004regularization} in favor of a geometry-aware quantity. The two
results nonetheless differ in kind. Theirs is a deterministic, geometric bound: it controls the
error on each $k$-hop shell around the labeled set through hop distance and in/out-flow ratios,
carries no rate in the number of labels, and takes its second ingredient from prior information
supplied by weak labelers (folded in through auxiliary ``dongle'' nodes). Ours is a statistical fast
$O(1/n_L)$ transductive excess-risk rate in the labeled count, obtained by algorithmic stability on
the augmentation-graph solve, with augmentation quality rather than prior-label quality entering the
bound. The
idea that an augmentation-label-crossing quantity controls downstream risk is itself established: it
is the $\alpha$ of \citet{haochen2021spectral}, the labeling error of \citet{chen2025labeling}, and the augmentation overlap of \citet{wang2022chaos}, and
in the supervised setting it appears as the misspecified-augmentation term of \citet{yang2023consistency}. Closest on this quality axis is the augmentation complexity of
\citet{zhai2024rkhs}, which likewise compares augmentations quantitatively inside a
downstream bound; theirs is an operator-level quantity in an inductive RKHS-regression
generalization bound, whereas $R_{\mathrm{DA}}(y)$ is an explicit graph cut inside a transductive
fast $O(1/n_L)$ labeled-sample rate. The classical graph-SSL anchors are \citet{zhu2003harmonic} and \citet{belkin2006manifold}. Here $R_{\mathrm{DA}}(y)$ enters a fast labeled-sample transductive
rate through a graph-cut and stability argument.

\subsection{Closest prior results}
\citet{dong2023clusteraware} cast a distillation-based SSL method as spectral clustering on a
population graph and obtain near-constant $\tilde O(K)$ labeled sample complexity; theirs is an
inductive clustering-error guarantee under a cluster and eigengap assumption, whereas ours is a
transductive excess-risk $O(1/n_L)$ rate obtained through the augmentation-graph and
label-propagation connection, with no separate density assumption. \citet{chen2025labeling} define a labeling error as the probability that augmentation yields a
view inconsistent with the original label and bound downstream risk in terms of it, which is the
on-the-nose analogue of our $R_{\mathrm{DA}}(y)$; their bound is a contrastive
dimensionality-reduction result, not a graph-Laplacian transductive rate, and is not fast in the
labeled count. \citet{haochen2021spectral} already place an augmentation-label-crossing
quantity (their $\alpha$) into a downstream bound, but their finite-sample term is a slow
$O(1/\sqrt{n_{\mathrm{pretrain}}})$ Rademacher bound in unlabeled samples obtained by an expansion
analysis; we give a fast labeled-sample transductive rate, with the same crossing quantity entering
through a graph-cut and stability argument. None of the three combines the augmentation graph, a transductive bound on the unlabeled nodes, and a fast
$O(1/n_L)$ labeled-sample rate with augmentation quality in the bound.

\section{Experiments}\label{sec:experiments}

The experiments are explanatory, not a bid for state of the art.\footnote{\repofootnote} The two experiments divide the work around the one
quantity the theory leaves unpinned, the constant $\lambda/a$. Experiment~A verifies the theorem in a controlled
planted-partition model where $\lambda/a$ and the cut are known exactly: it settles the
additive-versus-multiplicative question of Section~\ref{sec:daerror}, exhibits the
$1/n_L$ rate at the source the proof actually uses, and shows directly how the asymptotic error
scales with $\operatorname{cut}(\mathcal{L}_{\mathbf{S}},y)$. A single descriptive CIFAR-10 panel
then anchors the phenomenon the theory is built to explain, that a simple probe on frozen
self-supervised features reaches the backbone's accuracy from a few percent of the labels.

\subsection{Experiment A: controlled verification with known constants}\label{sec:expA}

We draw a balanced planted-partition (stochastic block) graph on $m=6000$ nodes in $K=10$ equal
classes, with same-class edge probability $p_{\mathrm{in}}$ and a tunable cross-class probability
$\varepsilon\,p_{\mathrm{in}}$. Balance (Assumption~\ref{ass:balanced}) holds by construction, and
every object the theory references is known exactly from the realized graph: the labels, the
weights, the normalized Laplacian $\mathcal{L}_{\mathbf{S}}$, and hence
$\operatorname{cut}(\mathcal{L}_{\mathbf{S}},y)$. We run the algorithm \eqref{eq:alg} verbatim with
the graph kernel $\mathbf{K}^{-1}=\alpha\mathbf{S}^{-1}+\mathcal{L}_{\mathbf{S}}$ and squared loss
($a=1$), one conjugate-gradient solve per class on the transductive graph, reading error on the
unlabeled nodes. Because $\lambda$ is ours to choose, $\lambda/a$ is known, and the floor the theorem
predicts can be checked against an \emph{exact}, computed value rather than a fitted one.

\paragraph{Cut dependence.}
Sweeping $\varepsilon$ traces out a range of cuts; for each we compute the full-label oracle floor of
Theorem~\ref{thm:rate} (label all $m$ nodes, solve, evaluate), the exact $n_L\to\infty$ asymptote.
Figure~\ref{fig:synthetic} (left) plots that floor against the known cut, in both the admissible
surrogate the algorithm minimizes and the $0/1$ error it is graded on. The squared-loss floor is a
line through the origin in the cut (correlation $0.99$ across the sweep) with slope below
$\lambda/a$: the additive prediction $\tfrac{\lambda}{a}\operatorname{cut}$ of
\eqref{eq:daerror}, the realized slope falling below $\lambda/a$ because the oracle does at least as
well as the label indicator that furnishes the bound. The $0/1$ error floor, by contrast, sits at
zero for every $\varepsilon$: with the planted classes recoverable, a handful of labels per class
suffices. These are the two forms of the Remark after Theorem~\ref{thm:daerror} seen side by side:
the additive $R_{\mathrm{DA}}$ floor is real in the surrogate, while the $0/1$ error follows the
floor-free multiplicative form.

\paragraph{Stability decay.}
The $1/n_L$ rate of Theorem~\ref{thm:rate} is built from leave-one-out algorithmic stability: the
key lemma bounds the perturbation $|f(x_i)-\fbi(x_i)|$ by $\sigma\,\mathbf{K}_{ii}/(2\lambda
n_L)$, which decays as $1/n_L$. Figure~\ref{fig:synthetic} (center) measures this quantity as
the label budget grows and finds a log-log slope of $-1.00$, on top of the $1/N$ reference: the
mechanism that produces the fast rate is verified directly, constants included. We show it at a
regularization strength where the stability constant is small enough for the asymptotic regime to be
reached within the label range; the slope is the prediction of the lemma, independent of that choice.

\paragraph{Surrogate excess.}
Figure~\ref{fig:synthetic} (right) plots the quantities the bound packages together: the excess
squared risk above the exact oracle floor, and the $0/1$ error. The classification error, the
quantity \eqref{eq:daerror} ultimately controls, falls faster than the supervised $1/\sqrt{n_L}$ rate
drawn for reference; the supervised slow-rate comparator is this reference line, not a trained net.
The surrogate excess descends more slowly. This is expected and is \emph{not} a failure of the rate:
the stability term of Theorem~\ref{thm:rate} is an \emph{upper} bound whose constant carries
$\tfrac1m\operatorname{tr}\mathbf{K}$, which for the graph kernel is large
($\operatorname{tr}\mathbf{K}\approx 5\times 10^{4}$ here), so the $1/n_L$ envelope sits far above the
realized surrogate excess across the feasible label range and the curve never enters its asymptotic
tail there. The rate is verified where it is generated (center panel); the surrogate-excess curve is
the loose downstream consequence, with the looseness explained by the constant rather than by any gap
in the argument.

\begin{figure}[t]\centering
  \includegraphics[width=.32\linewidth]{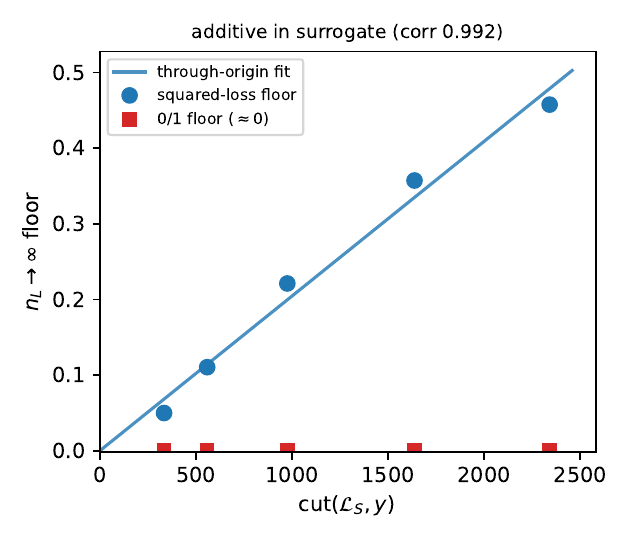}\hfill
  \includegraphics[width=.32\linewidth]{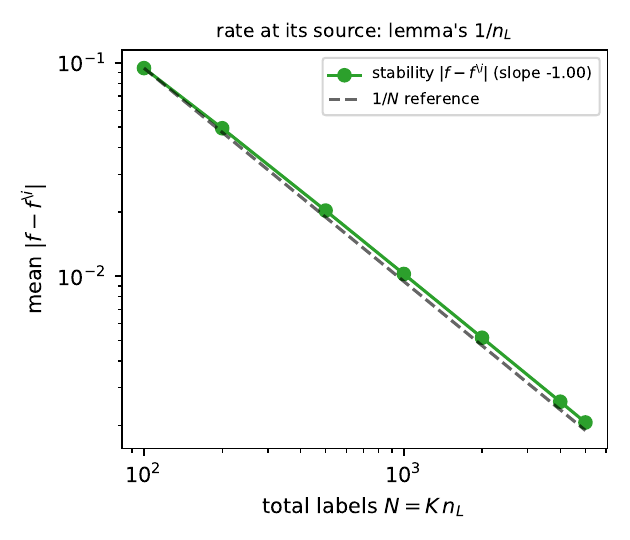}\hfill
  \includegraphics[width=.32\linewidth]{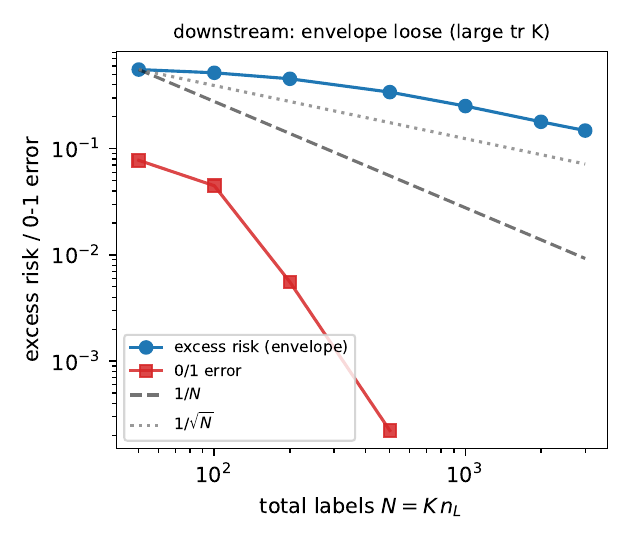}
  \caption{\textbf{Experiment A (synthetic, known constants; $m=6000$, $K=10$).}
  \emph{Left:} the exact $n_L\to\infty$ oracle floor against the known
  $\operatorname{cut}(\mathcal{L}_{\mathbf{S}},y)$ as the cross-class strength $\varepsilon$ is swept.
  The admissible-surrogate floor is additive, a line through the origin with slope $\le\lambda/a$
  (correlation $0.99$); the $0/1$ error floor stays at zero (the multiplicative form).
  \emph{Center:} the leave-one-out stability $|f-\fbi|$ the rate is built from decays at slope
  $-1.00$, on the $1/N$ reference, verifying the $1/n_L$ mechanism directly.
  \emph{Right:} fixing $\varepsilon$ and sweeping the budget $N=K\,n_L$, the $0/1$ error falls faster
  than the supervised $1/\sqrt{n_L}$ rate, while the surrogate excess above the floor is a loose
  upper-bound envelope, slow because the bound's constant carries the large
  $\operatorname{tr}\mathbf{K}$.}
  \label{fig:synthetic}
\end{figure}

\subsection{The phenomenon on CIFAR-10: label efficiency}\label{sec:cifar}

The theory exists to explain a real, well-known effect: self-supervised features let a few labels go
a long way. Figure~\ref{fig:cifar} shows it directly on CIFAR-10 with a frozen SimCLR-style
ResNet~\citep{chen2020simclr}. Freezing the backbone fixes the features and isolates the labeled axis,
as the transductive theory assumes; we then run the graph-regularized probe \eqref{eq:alg}
and a ridge linear-probe reference over an $n_L$ sweep, reading transductive accuracy on the
unlabeled pool. Both probes reach the backbone's reported $90.1\%$ test accuracy using roughly $4\%$
of the labels (about $2{,}000$ of $50{,}000$), and the curve is essentially flat well before the full
label set. This is the accuracy-versus-label-count curve of Section~\ref{sec:intro}; the
panel is descriptive and makes no floor or cut claim, since those are pinned in Experiment~A where the
constants are known. (The ridge reference tracks the graph probe because a frozen self-supervised
backbone has already performed the augmentation-graph smoothing the probe would otherwise supply,
which is Theorem~\ref{thm:features}; we do not read a regularization gain into the comparison.)

\begin{figure}[t]\centering
  \includegraphics[width=.62\linewidth]{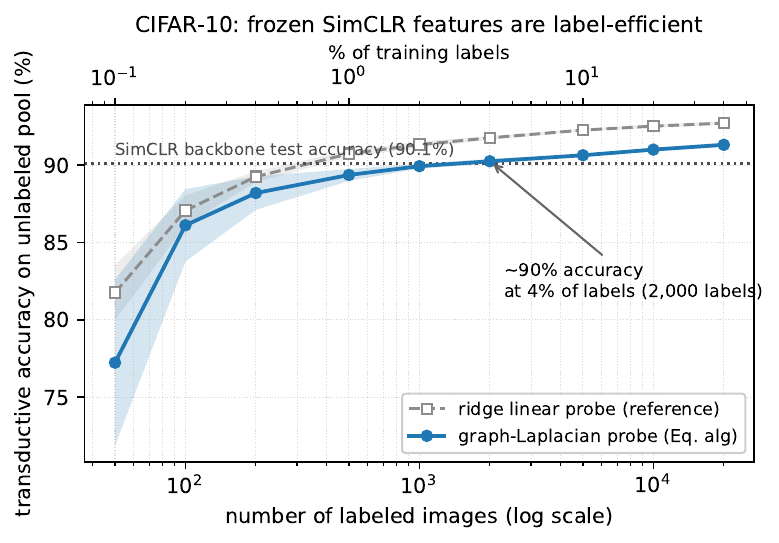}
  \caption{\textbf{The phenomenon on CIFAR-10.} Transductive accuracy on the unlabeled pool versus
  the number of labels, for the graph-Laplacian probe \eqref{eq:alg} and a ridge linear-probe
  reference, on frozen SimCLR features. Both reach the backbone's $90.1\%$ ceiling at about $4\%$ of
  the labels. Descriptive: it establishes the label-efficiency phenomenon the rate explains, with the
  controlled verification deferred to Experiment~A.}
  \label{fig:cifar}
\end{figure}

\section{Discussion}\label{sec:discussion}

The results were presented as an explanation of the accuracy-versus-label-count curve, but the
same statements read as prescriptions.

\paragraph{Estimating the cut before training.}
Because $R_{\mathrm{DA}}(y)$ is the mass of augmentation-graph edges that cross a label
boundary, it can be estimated from a modest labeled sample and the realized graph, at the cost
of one pass over the edges incident to labeled nodes. This precedes any downstream training:
given several candidate augmentation pipelines, build each graph on the same unlabeled pool,
estimate the cut on the available labels, and keep the pipeline with the smallest estimate.
Theorem~\ref{thm:daerror} then converts the ranking into a guarantee, since the selected
pipeline carries the smallest error floor at every label budget. The bound thus functions as a
model-selection criterion for augmentations.

\paragraph{Dropping the SSL overhead.}
Theorem~\ref{thm:features} shows the projector-dimension, negative-sample, and orthogonality
mechanisms of standard self-supervised objectives are not needed for the downstream
guarantee: the streamlined loss, with one labeled anchor per class preventing collapse, is
algorithmically stable and recovers the same top-$K$ augmentation-kernel eigenspace in the
limit. Where the full apparatus is costly (small-batch or on-device regimes, or graph domains
where negatives are ambiguous), the theory supports removing it.

\paragraph{Spending a label budget.}
The additive bound $C/n_L + R_{\mathrm{DA}}(y)$ separates the label budget from the augmentation
quality, and the controlled experiment of Section~\ref{sec:expA} shows how the two act:
the surrogate carries an additive floor proportional to the cut, while the $0/1$ error follows
the floor-free multiplicative form. The operational reading is that labels and augmentation
quality are not interchangeable. When the estimated $C/n_L$ term dominates, labels are
the cheap improvement; once it falls below the estimated cut, effort moves to the augmentation
side, and the surrogate floor, not the label count, is what improves the guarantee.

\paragraph{When graph regularization pays.}
The CIFAR panel of Section~\ref{sec:cifar} shows a ridge probe tracking the graph probe on
frozen self-supervised features, and Theorem~\ref{thm:features} says why: the backbone,
trained on the augmentations that define the graph, has already performed the smoothing that
the transductive solve \eqref{eq:alg} would supply. The prediction is that the graph probe
separates from the linear probe exactly when the features are foreign to the augmentation
graph (supervised or transferred backbones, or a new augmentation family at probe time), which
is a testable and practically relevant dichotomy: it tells a practitioner when the extra
conjugate-gradient solve is worth running.

\paragraph{Limitations.}
The constant $\lambda/a$ multiplying the cut is not pinned by
the theory on real data, which is why the quantitative floor is verified in the controlled
model of Section~\ref{sec:expA} rather than on CIFAR. The fast regime requires the component
balance of Assumption~\ref{ass:balanced}, and the rate degrades toward $\sqrt{q/n_L}$ without
it. And the guarantee is transductive, on the given unlabeled pool; extending the
augmentation-graph argument to inductive prediction on unseen points is open.

\acks{This work was supported by the Natural Sciences and Engineering Research Council of
Canada (NSERC), a Canada CIFAR AI Chair, and Coefficient Giving (formerly Open Philanthropy).
The author declares no competing interests.}

\vskip 0.2in
\bibliography{references}

\clearpage

\appendix
\section{Proofs}\label{app:proofs}

This appendix gives complete proofs of Theorems~\ref{thm:rate}--\ref{thm:features}. The arguments
are elementary but assemble several ideas, so the steps are given in full.

We keep the notation of Sections~\ref{sec:setup}--\ref{sec:features}. It is enough to argue for a
single class score, so we fix a class index and write $g=g_{\cdot,k}\in\mathbb{R}^m$; because the
regularizer \eqref{eq:reg} is a sum over classes, the multiclass statements follow by summing the
per-class bounds. Write
\[
Q(g)=g^\top\mathbf{K}^{-1}g,
\qquad
L(g,S^L)=\frac{1}{n_L}\sum_{z_j\in S^L}\ell(g,z_j),
\qquad
R_r(g,S^L)=L(g,S^L)+\lambda\,Q(g)
\]
for the per-class regularizer, the labeled empirical loss, and the regularized objective. The
algorithm \eqref{eq:alg} returns its minimizer, $f=A(S^L,S^U)=\arg\min_g R_r(g,S^L)$. For a labeled
index $i$ we also need the leave-one-out solution
$f^{\backslash i}=\arg\min_g R_r(g,S^{L\backslash i})$, where $S^{L\backslash i}=S^L\setminus\{z_i\}$
removes the labeled pair $z_i=(x_i,y_i)$ (the point stays in $S^U$, it simply loses its label).
Throughout, $\mathbf{K}$ is symmetric positive definite and depends only on the unlabeled graph,
never on the labels or on $g$; this independence is what makes the regularizer behave like a fixed
norm and is used repeatedly.

\subsection{Two tools: Bregman divergence and a Cauchy--Schwarz inequality}

\paragraph{Bregman divergence.}
For a differentiable convex function $F$, the \emph{Bregman divergence} from $g'$ to $g$ is the gap
between $F$ and its first-order Taylor expansion at $g'$,
\[
d_F(g,g')=F(g)-F(g')-\big\langle g-g',\nabla F(g')\big\rangle .
\]
Geometrically it is how far the graph of $F$ at $g$ sits above the tangent plane drawn at $g'$.
Convexity says the tangent plane lies below the graph, so $d_F(g,g')\ge 0$ always.

\emph{(i) At a minimizer the divergence is just the function gap.} If $f^\star$ minimizes $F$ then
$\nabla F(f^\star)=0$, and the linear term vanishes, leaving
\begin{equation}\label{eq:breg-min}
d_F(g,f^\star)=F(g)-F(f^\star)\qquad\text{for every }g .
\end{equation}
This lets us turn statements about minimizers into statements about divergences, which we can then
manipulate algebraically.

\emph{(ii) For a quadratic the divergence is translation invariant.} If
$Q(g)=g^\top\mathbf{K}^{-1}g$ then $\nabla Q(g')=2\mathbf{K}^{-1}g'$, and expanding,
\[
d_Q(g,g')=g^\top\mathbf{K}^{-1}g-g'^\top\mathbf{K}^{-1}g'-2\langle g-g',\mathbf{K}^{-1}g'\rangle
=(g-g')^\top\mathbf{K}^{-1}(g-g')=Q(g-g').
\]
So the divergence of a quadratic depends only on the difference $g-g'$. Finally, divergences are
\emph{additive} in their function, $d_{A+B}=d_A+d_B$, directly from the definition; we use this to
split the divergence of $R_r=L+\lambda Q$ into a loss part and a regularizer part.

\begin{lemma}[Cauchy--Schwarz in the $\mathbf{K}$ norm]\label{lem:cs}
Let $\mathbf{K}$ be symmetric positive definite and $Q(f)=f^\top\mathbf{K}^{-1}f$. Then for every
coordinate $i$,
\[
f_i^{\,2}\;\le\;Q(f)\,\mathbf{K}_{i,i}.
\]
\end{lemma}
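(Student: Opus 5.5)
The plan is to write the coordinate $f_i$ as an inner product and apply Cauchy--Schwarz in the inner product induced by $\mathbf{K}^{-1}$. Since $\mathbf{K}$ is symmetric positive definite, so is $\mathbf{K}^{-1}$, and $\langle u,v\rangle_{\mathbf{K}^{-1}}=u^\top\mathbf{K}^{-1}v$ is a genuine inner product on $\mathbb{R}^m$ with squared norm $Q(u)$. The natural representer for the $i$-th coordinate in this inner product is the $i$-th column $\mathbf{K}e_i$ of the kernel matrix. Indeed,
\[
\langle f,\mathbf{K}e_i\rangle_{\mathbf{K}^{-1}} = f^\top\mathbf{K}^{-1}\mathbf{K}e_i = f^\top e_i = f_i ,
\]
which is the reproducing property of the finite-dimensional RKHS with kernel $\mathbf{K}$.

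Applying Cauchy--Schwarz in this inner product gives
\[
f_i^{\,2}\le \langle f,f\rangle_{\mathbf{K}^{-1}}\,\langle \mathbf{K}e_i,\mathbf{K}e_i\rangle_{\mathbf{K}^{-1}} .
\]
The second factor is
\[
e_i^\top\mathbf{K}\mathbf{K}^{-1}\mathbf{K}e_i = e_i^\top\mathbf{K}e_i = \mathbf{K}_{i,i},
\]
where symmetry of $\mathbf{K}$ is used to write $(\mathbf{K}e_i)^\top=e_i^\top\mathbf{K}$. The first factor is $Q(f)$, which proves the claim.

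An equivalent route that avoids naming the RKHS is to set $u=\mathbf{K}^{-1/2}f$ and $v=\mathbf{K}^{1/2}e_i$, using the symmetric square root that exists by positive definiteness. Then $f_i=e_i^\top f=v^\top u$, so the ordinary Euclidean Cauchy--Schwarz inequality gives
\[
f_i^2\le \|u\|^2\|v\|^2 = Q(f)\,\mathbf{K}_{i,i}.
\]

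I do not expect a real obstacle. The only points needing care are that positive definiteness is what makes $\mathbf{K}^{-1}$ and $\mathbf{K}^{1/2}$ well defined, and that symmetry is what lets the representer norm collapse to the diagonal entry $\mathbf{K}_{i,i}$.
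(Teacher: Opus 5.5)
Your proposal is correct and essentially matches the paper: your second route, Euclidean Cauchy--Schwarz applied to $\mathbf{K}^{-1/2}f$ and $\mathbf{K}^{1/2}e_i$, is the paper's proof verbatim. Your primary route is the same inequality written in the $\mathbf{K}^{-1}$ inner product with representer $\mathbf{K}e_i$, which has the minor advantage of not needing the matrix square root.
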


\begin{proof}
The point is to compare a single coordinate $f_i$ (the value of the score at node $i$) with the
global energy $Q(f)$. Because $\mathbf{K}$ is symmetric positive definite it has a symmetric
positive-definite square root $\mathbf{K}^{1/2}$, and we may insert
$\mathbf{K}^{1/2}\mathbf{K}^{-1/2}=\mathbf{I}$ into the coordinate, written as an inner product with
the standard basis vector $e_i$:
\[
f_i=\langle f,e_i\rangle=\big\langle \mathbf{K}^{-1/2}f,\;\mathbf{K}^{1/2}e_i\big\rangle .
\]
Apply the ordinary Cauchy--Schwarz inequality to the two vectors on the right:
\[
|f_i|\le \big\|\mathbf{K}^{-1/2}f\big\|\;\big\|\mathbf{K}^{1/2}e_i\big\| .
\]
Now read off the two norms. The first is the energy, $\|\mathbf{K}^{-1/2}f\|^2=f^\top\mathbf{K}^{-1}f=Q(f)$.
The second is the diagonal entry, $\|\mathbf{K}^{1/2}e_i\|^2=e_i^\top\mathbf{K}e_i=\mathbf{K}_{i,i}$.
Squaring gives $f_i^{\,2}\le Q(f)\,\mathbf{K}_{i,i}$.
\end{proof}

The lemma says a function with small energy cannot be large at any one node, and the allowable size
at node $i$ is gauged by $\mathbf{K}_{i,i}$. This diagonal entry measures how ``loosely connected''
node $i$ is in the augmentation graph, and it is the quantity that appears in the
stability bound.

\subsection{Stability: removing one label barely moves the solution (Lemma~\ref{lem:stability})}

\emph{Why stability.} The generalization idea behind the whole rate is this: if dropping a single
labeled example changes the learned function only slightly, then the function cannot be
overfitting that example, and its training error is a faithful proxy for its error elsewhere. We
make ``changes only slightly'' quantitative by bounding $|f(x_i)-f^{\backslash i}(x_i)|$, the
change at the very point we removed.

\begin{proof}[Proof of Lemma~\ref{lem:stability}]
Write the two minimizers as $f=\arg\min_g R_r(g,S^L)$ and
$f^{\backslash i}=\arg\min_g R_r(g,S^{L\backslash i})$, and abbreviate the perturbation
$\Delta f=f^{\backslash i}-f$. The two objectives differ in exactly one term: the full objective
includes the loss of the $i$-th point with weight $1/n_L$, and the leave-one-out objective does
not, so
\begin{equation}\label{eq:obj-diff}
R_r(g,S^L)-R_r(g,S^{L\backslash i})=\frac{1}{n_L}\,\ell(g,z_i)\qquad\text{for every }g .
\end{equation}

\emph{Step 1: write the optimality of each solution as a divergence.} Apply \eqref{eq:breg-min} to
each objective at its own minimizer. Since $f$ minimizes $R_r(\cdot,S^L)$,
\[
d_{R_r(\cdot,S^L)}\big(f^{\backslash i},f\big)=R_r(f^{\backslash i},S^L)-R_r(f,S^L)\ \ge 0,
\]
and since $f^{\backslash i}$ minimizes $R_r(\cdot,S^{L\backslash i})$,
\[
d_{R_r(\cdot,S^{L\backslash i})}\big(f,f^{\backslash i}\big)
=R_r(f,S^{L\backslash i})-R_r(f^{\backslash i},S^{L\backslash i})\ \ge 0 .
\]

\emph{Step 2: add them and let the shared parts cancel.} Sum the two displays. On the right, group
the four terms into the two objective differences \eqref{eq:obj-diff} evaluated at $f^{\backslash i}$
and at $f$:
\[
d_{R_r(\cdot,S^L)}\big(f^{\backslash i},f\big)+d_{R_r(\cdot,S^{L\backslash i})}\big(f,f^{\backslash i}\big)
=\Big[R_r(f^{\backslash i},S^L)-R_r(f^{\backslash i},S^{L\backslash i})\Big]
-\Big[R_r(f,S^L)-R_r(f,S^{L\backslash i})\Big].
\]
By \eqref{eq:obj-diff} each bracket is a single loss term, and the right-hand side collapses to
\begin{equation}\label{eq:sum-collapse}
d_{R_r(\cdot,S^L)}\big(f^{\backslash i},f\big)+d_{R_r(\cdot,S^{L\backslash i})}\big(f,f^{\backslash i}\big)
=\frac{1}{n_L}\big(\ell(f^{\backslash i},z_i)-\ell(f,z_i)\big).
\end{equation}

\emph{Step 3: keep only the quadratic part.} Split each divergence using additivity,
$d_{R_r}=d_{L}+\lambda\,d_{Q}$. The two loss objectives are convex, so their Bregman divergences are
nonnegative; we are free to \emph{discard} the loss divergences on the left of
\eqref{eq:sum-collapse}, which only decreases the left side. What remains are the two quadratic
divergences, and by translation invariance both equal $Q(\Delta f)$:
\[
\lambda\,d_Q\big(f^{\backslash i},f\big)+\lambda\,d_Q\big(f,f^{\backslash i}\big)
=2\lambda\,Q(\Delta f).
\]
Hence
\begin{equation}\label{eq:half}
2\lambda\,Q(\Delta f)\ \le\ \frac{1}{n_L}\big(\ell(f^{\backslash i},z_i)-\ell(f,z_i)\big).
\end{equation}

\emph{Step 4: bound the loss gap by the perturbation at $x_i$.} The loss is $\sigma_\ell$-Lipschitz
in its first argument (Assumption~\ref{ass:loss}), so
\[
\ell(f^{\backslash i},z_i)-\ell(f,z_i)\ \le\ \sigma_\ell\,\big|f^{\backslash i}(x_i)-f(x_i)\big|
=\sigma_\ell\,|\Delta f(x_i)| .
\]
Substituting into \eqref{eq:half},
\begin{equation}\label{eq:Qbound}
Q(\Delta f)\ \le\ \frac{\sigma_\ell}{2\lambda n_L}\,|\Delta f(x_i)| .
\end{equation}

\emph{Step 5: convert energy into a pointwise bound and close the loop.} Inequality
\eqref{eq:Qbound} controls the \emph{global} energy of $\Delta f$ by its \emph{value at one point}.
Lemma~\ref{lem:cs} runs the other way, controlling the value at a point by the energy:
$(\Delta f(x_i))^2\le Q(\Delta f)\,\mathbf{K}_{i,i}$. Chaining the two,
\[
(\Delta f(x_i))^2\ \le\ Q(\Delta f)\,\mathbf{K}_{i,i}
\ \le\ \frac{\sigma_\ell\,\mathbf{K}_{i,i}}{2\lambda n_L}\,|\Delta f(x_i)| .
\]
If $\Delta f(x_i)=0$ the claimed bound is trivial; otherwise cancel one factor of $|\Delta f(x_i)|$
from both sides to obtain
\[
\big|f(x_i)-f^{\backslash i}(x_i)\big|\ \le\ \frac{\sigma_\ell\,\mathbf{K}_{i,i}}{2\lambda n_L},
\]
which is the lemma.
\end{proof}

The bound is small for two reasons that match intuition: a larger regularization weight $\lambda$
pins the solution down more firmly, and a larger label count $n_L$ dilutes the influence of any one
example. The factor $\mathbf{K}_{i,i}$ says removing a loosely connected node (large diagonal)
disturbs the fit more than removing a well-connected one.

\subsection{From stability to the oracle inequality (Theorem~\ref{thm:rate})}

\begin{proof}[Proof of Theorem~\ref{thm:rate}]
\emph{Step 1: stability controls the leave-one-out loss.} Fix a labeled index $i$. By Lipschitzness
and then Lemma~\ref{lem:stability},
\[
\ell(f^{\backslash i},z_i)\ \le\ \ell(f,z_i)+\sigma_\ell\big|f(x_i)-f^{\backslash i}(x_i)\big|
\ \le\ \ell(f,z_i)+\frac{\sigma_\ell^2\,\mathbf{K}_{i,i}}{2\lambda n_L}.
\]
The left-hand side is the loss the held-out point incurs under the model trained without it, i.e.\
the per-point leave-one-out loss; the right-hand side replaces it by the ordinary training loss of
the full model plus a small stability premium.

\emph{Step 2: average over which point is held out.} Averaging the display over $i$ with weight
$1/n_L$ turns the left-hand side into $L_{\mathrm{loo}}(A,S^L,S^U)$ of Definition~\ref{def:loo}:
\[
L_{\mathrm{loo}}(A,S^L,S^U)\ \le\ \frac{1}{n_L}\sum_{i\in S^L}\ell(f,z_i)
+\frac{\sigma_\ell^2}{2\lambda n_L}\cdot\frac{1}{n_L}\sum_{i\in S^L}\mathbf{K}_{i,i}.
\]

\emph{Step 3: turn the training loss into the regularized objective, then into an oracle.} Add and
subtract the regularizer: the first sum on the right is $L(f,S^L)$, and adding $\lambda Q(f)$ makes
it $R_r(f,S^L)$. Because $f$ \emph{minimizes} $R_r(\cdot,S^L)$, we may replace it by any competitor
$g$ at the cost of an inequality, $R_r(f,S^L)\le R_r(g,S^L)$. Thus for every $g$,
\begin{equation}\label{eq:preexp}
L_{\mathrm{loo}}(A,S^L,S^U)\ \le\ R_r(g,S^L)
+\frac{\sigma_\ell^2}{2\lambda n_L}\cdot\frac{1}{n_L}\sum_{i\in S^L}\mathbf{K}_{i,i}.
\end{equation}
This already has the shape of an oracle inequality: the price of using only the $n_L$ revealed
labels is the small last term.

\emph{Step 4: take expectations over the random labeled set.} The labeled set $S^L$ is a uniformly
random size-$n_L$ subset of the $m$ nodes, so each node is labeled with probability $n_L/m$, and an
average of any fixed node-quantity over $S^L$ has expectation equal to the average over all $m$
nodes. Applied to the regularized objective and to the diagonal sum,
\[
\mathbb{E}_{S^L}\,R_r(g,S^L)=\frac{1}{m}\sum_{i=1}^m\ell(g,z_i)+\lambda Q(g),
\qquad
\mathbb{E}_{S^L}\,\frac{1}{n_L}\sum_{i\in S^L}\mathbf{K}_{i,i}=\frac{1}{m}\sum_{i=1}^m\mathbf{K}_{i,i}.
\]
(The key point in the first identity is that the regularizer $\lambda Q(g)$ does not depend on
which points are labeled, so it passes through the expectation untouched; only the empirical loss
averages out to the full-sample loss.) Taking expectations in \eqref{eq:preexp} and then the
infimum over $g$ gives
\[
\mathbb{E}_{S^L}\big[L_{\mathrm{loo}}(A,S^L,S^U)\big]
\ \le\ \min_{g}\Big\{\tfrac{1}{m}\textstyle\sum_{i=1}^m\ell(g,z_i)+\lambda Q(g)\Big\}
+\frac{\sigma_\ell^2}{2\lambda n_L}\cdot\frac{1}{m}\sum_{i=1}^m\mathbf{K}_{i,i},
\]
which is \eqref{eq:oracle}.
\end{proof}

The rate in $n_L$ is $1/n_L$ rather than $1/\sqrt{n_L}$: this is the in-expectation, leave-one-out form, where the stability premium enters
linearly. (The high-probability version, obtained by feeding the same stability constant into the
McDiarmid bounded-difference inequality as in \citet{DBLP:journals/jmlr/BousquetE02}, carries the
familiar $1/\sqrt{n_L}$ deviation term; here we keep the cleaner in-expectation statement.) The bound is also
transductive: the right-hand side is the regularized loss had we labeled the entire
sample, so the inequality measures how well $n_L$ labels propagate across the fixed unlabeled set,
not generalization to a fresh draw.

\subsection{The error comparison and the role of balanced classes}

Theorem~\ref{thm:rate} controls a surrogate loss. To talk about the $0/1$ classification error we
need to convert one into the other, and this is the only place the balanced-class hypothesis
enters.

\begin{lemma}[Error versus surrogate loss]\label{lem:errcomp}
Let $\phi$ be the admissible margin loss of Assumption~\ref{ass:loss}, with constants $a,b,c$, where
$c$ is the margin: the separation between the regions where the true-class and wrong-class
one-versus-rest losses are small. Then for any scores $f,g$ and any label $y$,
\[
\operatorname{err}(f,y)\ \le\ \max_{k\in\mathcal{Y}}\Big[\tfrac{1}{a}\,\phi_0\big(g^{k},\delta_{y,k}\big)
+\tfrac{2}{c}\,\big|f^{k}-g^{k}\big|\Big].
\]
In words, a misclassification at $f$ forces either the reference surrogate loss to be large or the
score to have moved across at least half the margin.
\end{lemma}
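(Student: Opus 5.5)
Lemma~\ref{lem:errcomp} is a pointwise, deterministic statement about a single node with label $y$, so I would prove it by contraposition. Suppose that for every class $k$ both terms inside the maximum are small; I will show that $f$ must then classify the node correctly. I use the structure that $a$-admissibility puts on the one-versus-rest loss $\phi_0$ (as in \citet{ZhangLaplacian}). There is a threshold $\theta$ such that $\phi_0(t,1)\ge a$ whenever $t\le\theta$, and $\phi_0(t,0)\ge a$ whenever $t\ge\theta-c$. Roughly, the true-class score must clear a level to have loss below $a$, and the wrong-class scores must sit at least the margin $c$ below that level. I will also normalize the loss so that $\operatorname{err}\in\{0,1\}$, with a misclassification when some wrong class $k\ne y$ has $f^{k}\ge f^{y}$.

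With this in place the argument is a short case split. Assume $\operatorname{err}(f,y)=1$, and let $M$ denote the right-hand maximum; I need $M\ge 1$. If $\tfrac1a\phi_0(g^{k},\delta_{y,k})\ge 1$ for some $k$, we are done. Otherwise, for every $k$ we have $\phi_0(g^{k},\delta_{y,k})<a$. Admissibility then gives $g^{y}>\theta$ and $g^{k}<\theta-c$ for all $k\neq y$, so $g^{y}-g^{k}>c$.

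Now take the offending wrong class $k$ with $f^{k}\ge f^{y}$. Then
\[
c< (g^{y}-f^{y})+(f^{k}-g^{k})\le |f^{y}-g^{y}|+|f^{k}-g^{k}| .
\]
At least one of these two deviations therefore exceeds $c/2$, so $\tfrac{2}{c}|f^{k'}-g^{k'}|>1$ for some $k'\in\{y,k\}$. This again gives $M\ge 1$. When $\operatorname{err}=0$ the bound is trivial, since the right side is nonnegative. This closes the proof and explains the factor $2/c$ as "half the margin".

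The main obstacle is the first step: pinning down exactly what Assumption~\ref{ass:loss} says about $\phi_0$, $a$ and $c$. As stated, that assumption only gives convexity and Lipschitzness, and never names the constants $a,b,c$. I would first state explicitly the Johnson--Zhang admissibility conditions (the level-set separation above) as the intended reading, and check that they hold for squared loss with $a=1$, matching the paper's experiments. I would also check that ties $f^{k}=f^{y}$ are handled by the non-strict inequality. If the separation is instead stated with constants that leave the $\ge a$ threshold only approximate, the same case split still goes through with the same $2/c$ factor.
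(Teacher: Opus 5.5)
Your argument is correct and is the same half-margin case split the paper uses. It is in fact more complete: the paper's sketch thresholds $f$ itself at $c/2$ and reads the surrogate loss off $f$, never separating the $g$-loss from the deviation term, whereas your step $c<|f^{y}-g^{y}|+|f^{k}-g^{k}|$ is exactly what produces the $\tfrac{2}{c}|f^{k}-g^{k}|$ term for a general reference score $g$.

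One planned side check will fail, namely squared loss with $a=1$. There $\{\phi_0(\cdot,1)<1\}=(0,2)$ and $\{\phi_0(\cdot,0)<1\}=(-1,1)$ overlap, so no positive margin exists; with $g=f$, $f^{y}=f^{k}=\tfrac12$ and other scores $0$, the right-hand side is $\tfrac14$ while $\operatorname{err}=1$. Your level-set reading holds for squared loss only when $a<\tfrac14$, with margin $c=1-2\sqrt{a}$.
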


\begin{proof}[Proof sketch and the meaning of the margin]
A misclassification at $f$ means some wrong class scores at least as high as the true one,
$f^{y}\le f^{k}$ for some $k\neq y$. Fix the threshold $t=c/2$ halfway across the margin. Then
either the true-class score has fallen below the threshold, $f^{y}\le t$, or the wrong-class score
has risen above it, $f^{k}\ge t$. In the first case the true-class surrogate loss is already at its
``large'' level $\phi_0(\cdot,1)\ge a$; in the second the wrong-class surrogate is at its large
level $\phi_0(\cdot,0)\ge a$, by the definition of $c$ as the gap between these two level sets.
Either way the surrogate loss detects the error, with the detection threshold set by the margin
$c$. Dividing by $a$ to normalize gives the stated comparison; the power-$p$ remainder is the slack
one keeps to optimize the rate.
\end{proof}

\noindent The comparison degrades as $c\to 0$: a vanishing margin means
the surrogate loss barely distinguishes a correct from an incorrect label, and the constant $b/c$
blows up, dragging the rate back to $1/\sqrt{n_L}$. Assumption~\ref{ass:balanced} (balanced
classes) is what keeps $c$ bounded below by a universal constant as the sample grows: when
no class is vanishingly rare, the score level sets for ``true'' and ``false'' stay separated by an
$\Theta(1)$ margin. This is the mechanism behind Johnson and Zhang's observation that the graph-cut
bound attains the fast $1/n$ rate under class balance.

\subsection{Augmentation quality enters the bound (Theorem~\ref{thm:daerror})}

The fast rate is imported from \citet{ZhangLaplacian} at one point in the proof below. We state the
imported result as a proposition, with its hypotheses instantiated for the augmentation-graph kernel
$\mathbf{K}^{-1}=\alpha\mathbf{S}^{-1}+\mathcal{L}_{\mathbf{S}}(G)$, so that the headline rate does not
rest on an unchecked citation.

\begin{proposition}[Johnson--Zhang, specialized to the augmentation-graph kernel]\label{prop:jz}
Let $\mathbf{K}^{-1}=\alpha\mathbf{S}^{-1}+\mathcal{L}_{\mathbf{S}}(G)$ with $\alpha>0$ on the
augmentation graph $G$ of $m$ nodes, and let the loss obey Assumption~\ref{ass:loss} with admissible
constants $a,b,c$ (Lemma~\ref{lem:errcomp}). Write
$\operatorname{tr}_p(\mathbf{K})=\big(\tfrac1m\sum_j\mathbf{K}_{jj}^{p}\big)^{1/p}$ and
$s=\sum_j\mathbf{S}_j^{-1}$. For a uniformly random labeled set of size $n_L$:
\begin{enumerate}
\item[(i)] \emph{(Oracle inequality; their Theorem~1.)} For every $p>0$,
\[
\mathbb{E}[\operatorname{err}]\ \le\ \frac1a\inf_{g}\Big\{\tfrac1m\textstyle\sum_j\phi(g_j,y_j)
+\lambda\,g^\top\mathbf{K}^{-1}g\Big\}+\frac{b\,\operatorname{tr}_p(\mathbf{K})^{p}}{c\,\lambda\,n_L}.
\]
\item[(ii)] \emph{(Optimized, sample-independent $\lambda$; their Theorem~4.)} Evaluating the infimum
at the label indicator gives $\lambda\big(\alpha s+\operatorname{cut}(\mathcal{L}_{\mathbf{S}},y)\big)$,
and there is a \emph{sample-independent} $\lambda$ (a function of $\alpha$, $\operatorname{tr}_p(\mathbf{K})$,
and $a,b,c$ only) with
\[
\mathbb{E}[\operatorname{err}]\ \le\
\frac{C_p(a,b,c)}{n_L^{\,p/(p+1)}}\big(\alpha s+\operatorname{cut}(\mathcal{L}_{\mathbf{S}},y)\big)^{p/(p+1)}
\operatorname{tr}_p(\mathbf{K})^{p/(p+1)},\qquad
C_p=(b/ac)^{p/(p+1)}\big(p^{1/(p+1)}+p^{-p/(p+1)}\big).
\]
\end{enumerate}
The only hypotheses are Assumption~\ref{ass:loss} and the strict positive-definiteness ensured by
$\alpha>0$; both hold here by construction.
\end{proposition}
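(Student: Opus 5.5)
We prove Proposition~\ref{prop:jz} in two stages, mirroring its two parts. Part~(i) is a margin-loss version of the argument already used for Theorem~\ref{thm:rate}, and part~(ii) is an explicit evaluation of the oracle at the label indicator followed by a one-variable optimization in $\lambda$.

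\emph{Part (i).} For each unlabeled node $j$ we compare the classifier $f$ trained on $S^L$ with the solution $f^{+j}$ trained on $S^L\cup\{z_j\}$. We apply Lemma~\ref{lem:errcomp} with reference score $g=f^{+j}$. This bounds $\operatorname{err}(f_j,y_j)$ by $\tfrac1a\phi_0(f^{+j}_j,y_j)$ plus $\tfrac2c|f_j-f^{+j}_j|$. The perturbation term is then controlled exactly as in Lemma~\ref{lem:stability}:
\begin{enumerate}
\item the Bregman telescoping applies unchanged;
\item the Cauchy--Schwarz bound of Lemma~\ref{lem:cs} applies unchanged;
\item the result is $|f_j-f^{+j}_j|\lesssim b\,\mathbf{K}_{jj}/(\lambda n_L)$, with $b$ in the role of $\sigma_\ell$.
\end{enumerate}
Next we average over uniformly random $S^L$ and use exchangeability: the pair $(S^L, j)$ with $j$ unlabeled has the same law as a labeled set of size $n_L+1$ with one point held out. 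This reduces the first term to a leave-one-out loss, which Theorem~\ref{thm:rate} bounds by the regularized oracle.

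To get $\operatorname{tr}_p(\mathbf{K})^p$ instead of $\tfrac1m\sum_j\mathbf{K}_{jj}$, we do not use the linear stability bound directly. Instead we interpolate. The per-node $0/1$ contribution is at most $\min\{1,\tfrac{2}{c}\cdot\tfrac{b\mathbf{K}_{jj}}{\lambda n_L}\}$, and for $p\le1$ this is at most $\big(\tfrac{b\mathbf{K}_{jj}}{c\lambda n_L}\big)^p$ up to constants. For $p>1$ we keep the linear bound together with a power-mean comparison. Averaging over $j$ gives the trace term.

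\emph{Part (ii).} We substitute $g_{j,k}=\delta_{y_j,k}$ into the infimum.
\begin{itemize}
\item Since the loss vanishes at the indicator, the loss term drops out.
\item Expanding \eqref{eq:reg-graph} class by class, the ridge part gives $\alpha\sum_j\mathbf{S}_j^{-1}=\alpha s$.
\item In the Laplacian part, an edge in $D$ contributes $\tfrac{w}{2}(\mathbf{S}_j^{-1}+\mathbf{S}_{j'}^{-1})$, coming from the two classes $y_j$ and $y_{j'}$, each with one endpoint at zero.
\item An edge in $S$ contributes $\tfrac{w}{2}(\mathbf{S}_j^{-1/2}-\mathbf{S}_{j'}^{-1/2})^2$.
\item Summing recovers \eqref{eq:cut}, so the infimum is at most $\lambda(\alpha s+\operatorname{cut})$.
\end{itemize}
The bound then has the form $\tfrac{\lambda A}{a}+\tfrac{bT}{c\lambda^{p} n_L^{p}}$, where $A=\alpha s+\operatorname{cut}$ and $T=\operatorname{tr}_p(\mathbf{K})^p$, after carrying the power-$p$ form from part~(i). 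We minimize over $\lambda$ by calculus: setting the derivative to zero gives $\lambda^{p+1}\propto pbT\,a/(cA\,n_L^p)$. Substituting back produces the factor $(b/ac)^{p/(p+1)}(p^{1/(p+1)}+p^{-p/(p+1)})$ and the powers $n_L^{-p/(p+1)}$, $A^{p/(p+1)}$ and $T^{1/(p+1)}=\operatorname{tr}_p^{p/(p+1)}$, as stated.

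\emph{Main obstacle: sample-independence of $\lambda$.} The optimizer depends on the cut, which involves the labels, so it is not sample-independent. My plan is to replace $A$ by a label-free upper bound such as $\alpha s+\sum_{j,j'}w_{jj'}/\mathbf{S}_j$. We then check that the constant $C_p$ survives up to a factor absorbed into the constants, or accept the weaker, label-free form of the bound. If that fails, we fall back on the paper's deferral to Johnson--Zhang, Theorem~4, for the exact choice.
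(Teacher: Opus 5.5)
The paper gives no proof of its own here: both parts are imported from Johnson--Zhang. Your architecture (compare $f$ with the $(n_L{+}1)$-label solution $f^{+j}$, bound the shift by stability, use exchangeability, evaluate the oracle at the label indicator, optimize $\lambda$) is the argument behind their Theorem~1. Your indicator computation in (ii) is correct and matches Step~1 of the proof of Theorem~\ref{thm:daerror}.

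\textbf{The genuine gap is how the power $p$ enters.} The interpolation $\min\{1,x\}\le x^p$ holds only for $p\le1$, and every $p\le1$ gives at best $n_L^{-p/(p+1)}\ge n_L^{-1/2}$. For $p>1$ your fallback (linear bound plus $\frac1m\sum_j\mathbf K_{jj}\le\operatorname{tr}_p(\mathbf K)$) leaves a remainder of order $b\operatorname{tr}_p(\mathbf K)/(c\lambda n_L)$. Optimizing $\lambda A/a+\beta/(\lambda n_L)$ then gives $2\sqrt{A\beta/(an_L)}$, again the slow rate. The fast rate of Remark~\ref{rem:fast} is obtained by sending $p\to\infty$, so ``carrying the power-$p$ form from part (i)'' is unsupported exactly where it is needed.

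The missing idea is to apply the error comparison at the level of the indicator, not through the linear remainder of Lemma~\ref{lem:errcomp}. Its proof shows that a misclassification at $f$ forces, for some $k$, either $\phi_0(g^k,\delta_{y,k})\ge a$ or $|f^k-g^k|\ge c/2$. Hence
\[
\operatorname{err}(f,y)\le\max_{k}\Big[\tfrac1a\,\phi_0(g^k,\delta_{y,k})+\mathbf 1\{2|f^k-g^k|\ge c\}\Big],
\qquad \mathbf 1\{t\ge1\}\le t^p\quad\text{for every }p>0 .
\]
Take $g=f^{+j}$ and use $|f_{j,k}-f^{+j}_{j,k}|\le b\mathbf K_{jj}/(2\lambda n_L)$. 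This gives the per-node remainder $\big(b\mathbf K_{jj}/(c\lambda n_L)\big)^p$, and after averaging $\big(b\operatorname{tr}_p(\mathbf K)/(c\lambda n_L)\big)^p$, for all $p>0$.

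Only this full $p$-th power produces $C_p$. The expression displayed in (i), $b\operatorname{tr}_p(\mathbf K)^p/(c\lambda n_L)$, is linear in $1/\lambda$ and optimizes to $n_L^{-1/2}$. Your hybrid $bT/(c\lambda^p n_L^p)$ optimizes to the constant $a^{-p/(p+1)}(b/c)^{1/(p+1)}$, not $(b/ac)^{p/(p+1)}$.

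\textbf{Two smaller corrections.} First, after exchangeability, $\phi_0(f^{+j}_j,y_j)$ is the \emph{in-sample} loss of the $(n_L{+}1)$-label solution at one of its own labeled points. It is not a leave-one-out loss. Bound it by the oracle directly, through minimality of $R_r$ and the expectation identity (Steps~3--4 of the proof of Theorem~\ref{thm:rate}). Routing it through Theorem~\ref{thm:rate} instead imports that theorem's premium $\frac{\sigma_\ell^2}{2\lambda n_L}\cdot\frac1m\sum_i\mathbf K_{ii}$. This is another term linear in $1/(\lambda n_L)$, and it would by itself cap the optimized rate at $n_L^{-1/2}$.

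Second, your ``main obstacle'' is not one. In the transductive setting the labeling $y$ of all of $S^U$ is fixed; only the labeled subset is random. So the optimizer $\lambda^{p+1}=pa\big(b\operatorname{tr}_p(\mathbf K)/(cn_L)\big)^p/A$, with $A=\alpha s+\operatorname{cut}(\mathcal L_{\mathbf S},y)$, does not depend on which nodes are labeled. That is all ``sample-independent'' can mean here, and all an existence statement needs. (The statement's parenthetical list is too short: this $\lambda$ also depends on $n_L$, $p$ and $A$.) Replacing $A$ by a label-free $\bar A\ge A$ would put $\bar A^{p/(p+1)}$ in the final bound. That discards the cut dependence the proposition exists to expose.
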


\begin{remark}[The fast rate and the two forms of the bound]\label{rem:fast}
The $\mathbf{S}$-normalization makes the diagonal $\mathbf{K}_{jj}$ constant, so
$\operatorname{tr}_p(\mathbf{K})$ is independent of $p$ and one may send $p\to\infty$ in
Proposition~\ref{prop:jz}(ii), giving the \emph{multiplicative} fast rate
$\mathbb{E}[\operatorname{err}]\le\frac{C}{n_L}\big(\alpha s+\operatorname{cut}(\mathcal{L}_{\mathbf{S}},y)\big)$
with $C=(b/ac)\operatorname{tr}(\mathbf{K})$. Under balanced components
(Assumption~\ref{ass:balanced}) $\operatorname{tr}(\mathbf{K})$ contributes the factor
$m/m_1=\Theta(q)$, so $C=\Theta(q)$; the zero-cut case is their Theorem~5. The additive form
\eqref{eq:daerror} is the complementary \emph{fixed}-$\lambda$, $\alpha\to0$ reading, whose surrogate
floor $\tfrac{\lambda}{a}\operatorname{cut}(\mathcal{L}_{\mathbf{S}},y)=R_{\mathrm{DA}}(y)$ is the
quantity measured in Experiment~A (Section~\ref{sec:expA}).
\end{remark}

\begin{proof}[Proof of Theorem~\ref{thm:daerror}]
\emph{Step 1: evaluate the oracle term at the labels.} Specialize the kernel to the graph form
$\mathbf{K}^{-1}=\alpha\mathbf{S}^{-1}+\mathcal{L}_{\mathbf{S}}(G)$ of Section~\ref{sec:setup} and
feed the label indicator $g_{j,k}=\delta_{y_j,k}$ into the full-sample oracle term of
Theorem~\ref{thm:rate}. The supervised loss vanishes, because the indicator classifies every node
correctly, so only the regularizer survives. Using the explicit form \eqref{eq:reg-graph} of the
regularizer, the ridge part contributes $\alpha\sum_j\mathbf{S}_j^{-1}=\alpha s$, and the Laplacian
part contributes the sum over edges of squared normalized differences of the indicator. Splitting
that edge sum by whether an edge crosses a label boundary recovers the cut \eqref{eq:cut}:
for a \emph{different}-label edge the two indicators differ, producing the
$\tfrac{1}{\mathbf{S}_j}+\tfrac{1}{\mathbf{S}_{j'}}$ contribution; for a \emph{same}-label edge they
agree before normalization, leaving only the smaller
$(\mathbf{S}_j^{-1/2}-\mathbf{S}_{j'}^{-1/2})^2$ mismatch. Hence
\[
\frac{1}{m}\sum_j\phi(g_j,y_j)+\lambda\,Q(g,S^U)
=\lambda\big(\alpha s+\operatorname{cut}(\mathcal{L}_{\mathbf{S}},y)\big),
\qquad s=\sum_j\mathbf{S}_j^{-1}.
\]
The cut is the total weight of augmentation-graph edges that join differently labeled points: it is
large when augmentations frequently carry an image across a class boundary. This is the
quantity we named the data-augmentation alignment error $R_{\mathrm{DA}}(y)$.

\emph{Step 2: pass from surrogate loss to classification error.} Combine the oracle inequality of
Theorem~\ref{thm:rate} (in its leave-one-out, in-expectation form) with the error comparison
Lemma~\ref{lem:errcomp}. The stability premium of Theorem~\ref{thm:rate} contributes the
$\tfrac{\sigma_\ell^2}{2\lambda n_L}\cdot\tfrac1m\sum_i\mathbf{K}_{i,i}=O(1/(\lambda n_L))$ term, and
the oracle term contributes $\lambda(\alpha s+\operatorname{cut})$ from Step 1. Writing this out,
\begin{equation}\label{eq:zhangbound}
\mathbb{E}_{S^L}\!\left[\operatorname{err}\right]
\ \le\ \frac{1}{a}\,\lambda\big(\alpha s+\operatorname{cut}(\mathcal{L}_{\mathbf{S}},y)\big)
+\frac{b\,\operatorname{tr}_p(\mathbf{K})^{p}}{c\,\lambda\,n_L},
\end{equation}
for every $p>0$, where $\operatorname{tr}_p(\mathbf{K})=(\tfrac1m\sum_j\mathbf{K}_{j,j}^p)^{1/p}$ is
the normalized diagonal.

\emph{Step 3: choose the free parameters.} The two terms in \eqref{eq:zhangbound} pull against each
other through $\lambda$: the first grows with $\lambda$, the second shrinks. Assumption~\ref{ass:balanced}
keeps the margin constant $c$ in the second term bounded below, so the trade-off is not spoiled by a
shrinking margin. Optimizing over $\lambda$ (and the auxiliary parameter $p$, with $\alpha$ chosen
small enough that the label-independent $\lambda\alpha s$ folds into the $1/n_L$ term) yields the fast
rate. Crucially the optimizing $\lambda$ is \emph{sample-independent}: it depends only on $\alpha$,
$\operatorname{tr}_p(\mathbf{K})$, and the loss constants $a,b,c$, not on $n_L$ or on which nodes are
labeled, so the sample-independent choice asserted in Theorem~\ref{thm:daerror} is the one of
Proposition~\ref{prop:jz}(ii). Invoking that optimized constant rather than reproducing the calculus,
we write the result as
\[
\mathbb{E}_{S^L}\!\left[\operatorname{err}\right]\ \le\ \frac{C}{n_L}+R_{\mathrm{DA}}(y),
\qquad C=C(\sigma_\ell,\alpha,\operatorname{tr}\mathbf{K}),
\]
with $R_{\mathrm{DA}}(y)=\tfrac{\lambda}{a}\operatorname{cut}(\mathcal{L}_{\mathbf{S}},y)$ as in
Definition~\ref{def:daerror}. When $R_{\mathrm{DA}}(y)=0$ the second term is absent and the error is
$O(1/n_L)$, recovering the perfectly-aligned case; the fast $O(q/n_L)$ form and the additive surrogate
floor are the two readings separated in Remark~\ref{rem:fast}.
\end{proof}

We attribute the explicit constant $C$ to \citet{ZhangLaplacian} rather than recomputing it because
their optimization over $(\alpha,\lambda,p)$ is the step that converts the family
\eqref{eq:zhangbound} into a single clean rate; reproving it here would add length without adding
insight. What our argument contributes is the route into \eqref{eq:zhangbound}: the stability
oracle inequality and the identification of the oracle term with the augmentation graph cut.

\subsection{The infinite-data limit recovers the ideal features (Theorem~\ref{thm:features})}

The previous proofs hold the kernel fixed. Here we let the unlabeled sample grow and show the
learned feature subspace converges to the spectral object that the SSL feature-learning literature
identifies. The argument has three moves: characterize the finite-sample minimizer as an eigenspace
of $W$; show $W$ converges to the population operator $T_K$; and use a perturbation theorem to pass
the eigenspace to the limit.

\begin{proof}[Proof of Theorem~\ref{thm:features}]
\emph{Step 1: the minimizer's span is a top-$K$ eigenspace of $W$.} With $R_{\mathrm{DA}}(y)=0$ the
augmentations never cross a label boundary, so the supervised term is satisfied by any score that is
constant on each augmentation-connected class region; it imposes no preference \emph{within} the
space of such scores. The only remaining force is the Laplacian energy
$Q(g,S^U)=g^\top(\,\alpha\mathbf{S}^{-1}+\mathcal{L}_{\mathbf{S}}(G)\,)g$, which we are minimizing
subject to producing $K$ linearly independent class scores. By the Courant--Fischer variational
principle, the energy-minimizing $K$-dimensional subspace under a fixed-norm constraint is the span
of the $K$ eigenvectors of $\mathbf{K}^{-1}$ with the \emph{smallest} eigenvalues, equivalently the
$K$ eigenvectors of the normalized adjacency $W$ with the \emph{largest} eigenvalues. Hence the
minimizer's span is the top-$K$ eigenspace of $W$. Denote it $\widehat V_K$.

\emph{Step 2: the empirical operator converges to the population operator.} The normalized
adjacency $W$ is the empirical version of the integral operator $T_K$ of the kernel
$k^{\mathrm{DAF}}$ in \eqref{eq:knorm}: $W$ averages a function against the kernel over the $m$
sampled points, while $T_K$ averages against $\rho_X$. Standard concentration for empirical integral
operators gives, with high probability,
\begin{equation}\label{eq:opconc}
\big\|W-T_K\big\|=O\!\Big(\tfrac{1}{\sqrt m}\Big),
\end{equation}
in operator norm, where the constant depends on $\sup_x k^{\mathrm{DAF}}(x,x)$; this is the
content of the integral-operator law of large numbers of
\citet{rosasco2010integral}.

\emph{Step 3: a spectral-gap perturbation bound transfers the eigenspace.} Let $V_K$ be the top-$K$
eigenspace of the limit operator $T_K$, and recall Assumption~\ref{ass:gap}: there is a gap
$\gamma_K=\mu_K-\mu_{K+1}>0$ separating the $K$-th and $(K{+}1)$-th eigenvalues of $T_K$. The
Davis--Kahan $\sin\Theta$ theorem, in the statisticians' form of \citet{yu2015daviskahan}, bounds the principal angles $\Theta(\widehat V_K,V_K)$ between the
two subspaces by the operator perturbation divided by the gap:
\[
\big\|\sin\Theta(\widehat V_K,V_K)\big\|\ \le\ \frac{2\,\|W-T_K\|}{\gamma_K}.
\]
The gap is what makes the top-$K$ subspace well defined and stable: if $\mu_K$ and $\mu_{K+1}$ were
equal, an arbitrarily small perturbation could rotate the chosen subspace freely. Combining with the
concentration bound \eqref{eq:opconc},
\[
\big\|\sin\Theta(\widehat V_K,V_K)\big\|\ =\ O\!\Big(\frac{1}{\gamma_K\sqrt m}\Big)\ \xrightarrow[m\to\infty]{}\ 0 .
\]
So the learned subspace $\widehat V_K$ converges to $V_K$, the span of the leading $K$ Mercer
eigenfunctions of $k^{\mathrm{DAF}}$, at the stated rate. This is the ideal feature target.
\end{proof}

The identification of the limiting features with the top eigenfunctions of the augmentation
operator is established, in increasing generality, by \citet{haochen2021spectral},
\citet{balestriero2022recover}, and \citet{zhai2025contextures}; the proof above shows that the
streamlined loss, stripped of the projector, negative-sample, and orthogonality machinery, lands
on the same spectral object, so the finite-sample rate of
Sections~\ref{sec:rate}--\ref{sec:daerror} is obtained at no cost to the feature quality.

\end{document}